\documentclass{article}

\usepackage{microtype}
\usepackage{graphicx}
\usepackage{subfigure}
\usepackage{booktabs} 
\usepackage{{booktabs, xcolor, colortbl}}
\usepackage{multirow}
\usepackage{hyperref}
\usepackage{tcolorbox}

\usepackage[accepted]{icml2026}

\usepackage{amsmath}
\usepackage{amssymb}
\usepackage{mathtools}
\usepackage{amsthm}
\usepackage{diagbox}
\usepackage[capitalize,noabbrev]{cleveref}
\usepackage{makecell}
\theoremstyle{plain}
\newtheorem{theorem}{Theorem}[section]
\newtheorem{proposition}[theorem]{Proposition}

\newtheorem{corollary}[theorem]{Corollary}
\theoremstyle{definition}

\theoremstyle{remark}

\newcommand{\up}[1]{\textcolor{green!60!black}{$\uparrow$}\scriptsize{#1}}
\newcommand{\dn}[1]{\textcolor{red}{$\downarrow$}\scriptsize{#1}}
\definecolor{moccolor}{HTML}{FFF3E6} 

\newtcbox{\hl}[1][]{
  on line, 
  arc=1pt,                     
  colback=green!15,            
  colframe=green!60!black,     
  boxrule=0.8pt,               
  boxsep=1pt,                  
  left=2pt, right=2pt, top=2pt, bottom=2pt,
  #1
}

\icmltitlerunning{Mixture-of-Control: State-Aware Fine-Tuning for Transformer-based Models}

\newcommand{\hypbox}[2]{%
\begin{tcolorbox}[colback=white!98!black,colframe=white!30!black,boxsep=1.1pt,top=6.75pt]%
\vspace{1.75pt}%
\textbf{#1}\\[-0.575em]
\noindent\makebox[\textwidth]{\rule{\textwidth}{0.4pt}}
\\[0.25em]
#2
\end{tcolorbox}
}

\begin{document}

\twocolumn[

\icmltitle{Mixture-of-Control: State-Aware Fine-Tuning for Transformer-based Models}


\icmlsetsymbol{equal}{*}

\begin{icmlauthorlist}
\icmlauthor{Duc Anh Nguyen}{comp,equal}
\icmlauthor{Tien Ngoc Luu}{comp,equal}
\icmlauthor{Tung Pham}{comp}
\icmlauthor{Toan Tran}{comp}
\end{icmlauthorlist}

\icmlaffiliation{comp}{Qualcomm AI Research. Qualcomm AI Research is an initiative of Qualcomm Technologies, Inc}
\icmlcorrespondingauthor{Duc Anh Nguyen}{nguyenducanh909.bkhn@gmail.com}
\icmlcorrespondingauthor{Toan Tran}{toatran@qti.qualcomm.com}

\icmlkeywords{Machine Learning, ICML}

\vskip 0.3in
]



\printAffiliationsAndNotice{\\\icmlEqualContribution}  


\icmlkeywords{Machine Learning, ICML}

\vskip 0.3in

\begin{abstract}
State-based fine-tuning has emerged as a compelling alternative to weight-based adaptation for transformers, updating lightweight controls into states rather than model weights, offering substantial memory savings while retaining parameter efficiency. However, most existing state-based methods typically apply only per-block control updates, which limits inter-block information exchange and restricts representational adaptation. Meanwhile, prior mechanisms that enable cross-block communication often introduce considerable computational overhead, reducing their practicality for efficient fine-tuning. We introduce Mixture-of-Control (MoC), a lightweight fine-tuning framework that adaptively integrates local and global control signals to enhance representation learning. MoC treats block-wise control states as experts in a sparse mixture-of-experts process, enabling efficient communication across transformer blocks. Empirical results across diverse transformer-based benchmarks demonstrate that MoC outperforms state‑based methods while maintaining a comparable memory and computational efficiency. 

\end{abstract}

\section{Introduction}
Fine-tuning large pretrained models is a standard approach to adapt foundation models to downstream tasks, but full fine-tuning is expensive: it updates and stores billions of parameters per task. Parameter-efficient fine-tuning (PEFT) reduces this cost by freezing the backbone and learning only a small set of task parameters. Among PEFT methods, LoRA~\cite{hu2022lora} is particularly effective, replacing weight updates with a low-rank term that largely preserves performance while sharply reducing trainable parameters.

Many LoRA variants--including VeRA~\cite{kopiczko2023vera}, BinaryLoRA~\cite{zhang2024optimal}, MoLE~\cite{wu2024mixture}, and DoRA~\cite{liu2024dora}--push parameter or compute efficiency via vectorized updates, binary adaptation, mixture of lora experts, or decomposed updates. However, most remain layer-local, offering limited cross-layer communication and constraining structured representation adaptation. Recently, MoLEx~\cite{teo2025molex} partially addresses this with a Sparse Mixture-of-Layer-Experts, but its Top-$K$ selection requires extra layer-forward passes, raising per-layer cost, especially in large models.

Moreover, existing approaches often reduce trainable parameters without adequately addressing training-time memory usage
For example, halving DoRA's rank reduces GPU memory by only 0.256GB on an 8B model~\cite{zhangweight}, showing that fewer trainable parameters need not yield meaningful memory savings. Similarly, existing cross-layer interaction mechanisms such as MoLEx~\cite{teo2025molex} often add substantial computation and memory overhead, limiting their scalability as architectural complexity and model depth increase.

These two issues--restricted cross-layer communication and high memory overhead--motivate a different perspective: fine-tuning via control theory~\cite{zhang2024parameter}. Rather than modifying system parameters, control systems steer states via feedback. Parallel Control~\cite{zhangweight} casts LoRA as a control term and activations as states, reducing memory by avoiding intermediate-state storage. Yet it remains local to each layer and lacks global coordination across depth. This motivates our central research question:

\begin{figure}[h]
    \centering
    \hypbox{Research Question}{%
    \textit{``How can state‑based fine‑tuning be designed to capture deep cross‑layer interactions without sacrificing its core advantage: memory efficiency?''
    }}
    \vspace{-0.1in}
\end{figure}


We answer this with \textbf{Mixture-of-Control (MoC)}, a lightweight state-based method that couples local and global signals via a recurrent mixture-of-experts control pathway. MoC treats each layer’s low-rank control module as an \emph{expert} and uses a shared gate to sparsely route each layer’s representation to a Top-$K$ set of experts, producing a global control signal. This global signal is then mixed with the layer-local control (e.g., $\alpha$ vs.\ $1{-}\alpha$) and injected into the frozen block, enabling adaptive information propagation across depth. Our MoC supports cross-layer communication via low-rank projections without extra pretrained-block passes, preserving state-memory efficiency with minimal overhead. It is architecture-agnostic (FFN or attention; shared or separate gates) and improves robustness and adaptation across benchmarks while matching PEFT-level memory use.

In summary, our contributions are fourfold:
\begin{itemize} 
    \item We identify key limits of existing PEFT and state-based adaptation: controllers are mostly block-local (weak global coordination), while cross-block mechanisms often add substantial, block-dependent compute that impedes scaling to deep/complex architectures. 
    \item We show that cross-block communication can be realized via a lightweight low-rank \emph{control} pathway that exchanges information across layers without reusing pretrained blocks, offering an efficient alternative to full-block interaction.
    \item We propose \textbf{Mixture-of-Control (MoC)}, a stable, memory-efficient state-based framework that integrates block-local control with a sparsely routed global mixture of control experts, supported by theory showing the expressiveness preservation while improving stability, gradient flow, and representation error bound under mild assumptions.
    \item We conduct extensive experiments across NLU and NLG benchmarks on eight pretrained Transformer backbones, showing that MoC consistently improves adaptation while matching strong PEFT baselines in parameter and memory efficiency.
\end{itemize}

\section{Related Works}
\paragraph{Parameter-Efficient Fine-Tuning}
The rapid growth of Transformer-based models \cite{vaswani2017attention, Alexey2020_ViT} has made full fine-tuning computationally expensive, motivating Parameter-Efficient Fine-Tuning (PEFT). Early methods focused on tuning selected layers \cite{6909618, alain2018understandingintermediatelayersusing, lee2023surgicalfinetuningimprovesadaptation, kaplun2023less}, while recent approaches introduce lightweight trainable components into pretrained models. Adapter-based methods \cite{houlsby2019parameterefficienttransferlearningnlp, mahabadi2021compacterefficientlowrankhypercomplex} insert bottleneck modules, whereas prompt-based methods \cite{lester2021powerscaleparameterefficientprompt, liu-etal-2022-p, li2021prefixtuningoptimizingcontinuousprompts} optimize soft prompts but often suffer from initialization sensitivity \cite{razdaibiedina2023progressivepromptscontinuallearning}.

\paragraph{Low-Rank Adaptation}
Low-Rank Adaptation methods, most notably LoRA \cite{hu2021loralowrankadaptationlarge}, approximate weight updates using low-rank matrices to reduce trainable parameters. Subsequent variants improve efficiency via parameter sharing, quantization, or adaptive selection \cite{kopiczko2023vera, dettmers2023qloraefficientfinetuningquantized, hyeonwoo2023fedparalowrankhadamardproduct, pmlr-v235-zhang24y}. A notable extension involves leveraging a mixture of LoRA experts \cite{wu2024mixture}, where each layer learns a set of distinct low-rank experts for adaptive representation. More recently, DoRA \cite{liu2024dora} decomposes weight updates into magnitude and direction components, offering improved performance. Despite these advances, most approaches remain localized to individual layers, providing limited cross-layer communication and constraining the necessary representation during adaptation.

\vspace{-0.5em}

\paragraph{Cross-Layer Communication Adaptation} Unlike local adaptation strategies, Mixture of Layer Experts (MoLEx) \cite{teo2025molex} conceptualizes each pretrained layer (augmented with LoRA) as an expert and performs sparse upcycling through Top-$K$ routing across depth, enabling global information exchange among layers. By conditionally aggregating these layer experts, MoLEx enriches input representations with cross-layer signals, improving adaptability beyond localized updates. However, its Top-$K$ routing mechanism necessitates additional forward passes through the selected pretrained blocks, resulting in significantly higher computational cost than purely local approaches, especially for more complex model architectures.


\paragraph{Control-Theoretic View and State-Based Tuning}
Recent work has connected PEFT with Control Theory \cite{kirk2004optimal}, interpreting deep networks as dynamical systems where layers evolve feature states over time \cite{li2018maximumprinciplebasedalgorithms}. This perspective has informed research on continuous-depth models, stability, and optimization \cite{tabuada2024universalapproximationpowerdeep, lu2020finitelayerneuralnetworks, nguyen2024pidformertransformermeetscontrol, benning2019deeplearningoptimalcontrol}.
Zhang et al.~\cite{pmlr-v235-zhang24y} formalized LoRA as a controlled dynamical system, with pretrained models defining nominal dynamics and learnable controllers enabling adaptation. Building on this, the State-Based Framework \cite{zhangweight} introduces Parallel Control via LoRA, reducing memory usage by avoiding intermediate state storage. Our work generalizes the framework, providing a more scalable state-based adaptation approach with improved performance and comparable memory efficiency.

\section{Preliminaries}
\label{sec:preliminaries}
This section introduces the foundational concepts of the Weight-based method, the State-based method, and the Sparse Mixture of Experts (SMoE).
\subsection{Weight-based Fine-Tuning}
LoRA~\cite{hu2022lora} fine-tunes $W_0 \in \mathbb{R}^{d\times k}$ by adding a low-rank update $\Delta W = BA$ with $B \in \mathbb{R}^{d\times r}$, $A \in \mathbb{R}^{r\times k}$, and $r \ll \min\{d,k\}$, yielding $W' = W_0 + BA$ where $W_0$ is frozen and $A,B$ are trainable. Many variants exploit this structure for efficiency, including VeRA \cite{kopiczko2023vera}, MoLE \cite{wu2024mixture}, and DoRA \cite{liu2024dora}, which leverage low-rank structures to reduce computational cost. These mechanisms are categorized as weight-based fine-tuning (Weight-FT), which adapts model weights directly. Its effectiveness is often attributed to the low intrinsic-rank structure of task-specific updates \cite{hu2022lora, aghajanyan2021intrinsic}. Mathematically, let $x_0$ be the input, $x_t$ be the output representation at block $t$, and $a_t$ be a nonlinear activation (ReLU or GeLU), a pretrained transformer block can be adapted as follows \cite{chen2022adaptformer}:
\begin{align}
x_{t+1} = x_t + a_t\!\left((W_t + \Delta W_t)x_t\right).
\end{align}
Weight-FT often incurs high memory costs due to storing intermediate states. To mitigate this, the State-Based Framework \cite{zhangweight} reformulates fine-tuning from a control-theoretic perspective.

\subsection{State-based Fine-Tuning}
Weight-FT often incurs a high memory cost due to the storage of intermediate states. To mitigate this challenge, the State-Based Framework \cite{zhang2024parameter} reformulates fine-tuning from a control-theoretic perspective.

\paragraph{Forward pass as the feedback control system}
Consider the continuous-time ordinary differential equations (ODEs) with affine control \cite{franklin2002feedback}:
\begin{equation}\label{eq:feedback-control}
\dot{x}(t) = f_t(W(t)x(t)) + G(t)u(t),    
\end{equation}
where $x(t)$ is the state vector, $W(t)$ is the state matrix, $f_t$ is the intrinsic dynamic, $u(t)$ is the control input and $G(t)$ is the control matrix. In state-feedback control systems, $u(t)$ can be defined as a function of state $x(t)$ through the feedback gain matrix $K(t)$, such that $u(t) = K(t)x(t)$. Substituting this into the system in ~\cref{eq:feedback-control} yields:
\begin{equation}
    \label{continuous_feedback_control}
    \dot{x}(t) = f_t(W(t)x(t)) + G(t)K(t)x(t),
\end{equation}
where $G(t)K(t)x(t)$ serves as the control term, modifying the system's trajectory in continuous space through the control matrix $G(t)$ and the feedback gain $K(t)$. Discretizing the \cref{continuous_feedback_control} gives 
\begin{equation}
\label{feedback_control}
    x_{t+1} = x_t + f_t(W_t x_t) + G_t K_t x_t.
\end{equation}
Considering $x_t$ as the output representation in the $t$-th block, the feedback control loop can be viewed as analogous to the forward pass in a neural network.

\paragraph{State-based fine-tuning}
Motivated by this analogy, \citet{zhangweight} introduced Parallel Control, interpreting $G_tK_t$ as a low-rank control term:
\begin{equation}
x_{t+1} = x_t + f_t(W_t x_t) + g_t(x_t),
\end{equation}
where $g_t$ is the control function, implemented in practice as a low-rank matrix transformation, i.e., $g_t(x_t) = \Delta W_t x_t$. This design aims to reduce memory usage in intermediate layers while mitigating performance degradation.
\subsection{Sparse Mixture of Experts (SMoE)}
\label{subsec:SMoE}
A SMoE model is typically composed of multiple MoE blocks, each containing a set of experts. The experts within each block are responsible for processing different aspects of the input, and their outputs are combined to form the block's final output. Let $\mathbf{X} \in \mathbb{R}^{m \times d}$ be the matrix of $m$ input token embeddings, where each token is represented by a $d$-dimensional vector and $\mathbf{W}_{\text{gate}} \in \mathbb{R}^{n \times d}$ be the gating weight matrix for $n$ experts. The gating score matrix $\mathbf{S} \in \mathbb{R}^{m \times n}$ is computed as: $\mathbf{S} = \mathbf{X} \mathbf{W}_{\text{gate}}^\top$.

Each entry $s_{i,j}$ in $\mathbf{S}$ represents the compatibility score between token $i$ and expert $j$, showing how suitable expert $j$ is for processing token $i$. Higher scores indicate a stronger preference for routing a token to an expert. For each token $i \in \{1, \dots, m\}$, we select the top-$k$ experts with the highest scores, denoted by the index set $\mathcal{T}_i \subseteq \{1, \dots, n\}$ with $|\mathcal{T}_i| = k$. The routing weight for token $i$ to expert $j$ is determined as $ w_{i,j} = 
\begin{cases}
\displaystyle\frac{\exp(s_{i,j})}{\sum_{j' \in \mathcal{T}_i} \exp(s_{i,j'})} & , \text{ if } j \in \mathcal{T}_i, \\
\qquad \quad 0 &, \text{ otherwise.}
\end{cases}$
Each expert $j$ is presented by a feedforward network $f_j: \mathbb{R}^d \rightarrow \mathbb{R}^d$. The output of the token $i$ is aggregated as:

\begin{equation}
\mathbf{y}_i = \sum_{j \in \mathcal{T}_i} w_{i,j} \cdot f_j(\mathbf{x}_i).
\end{equation}

To mitigate routing collapse—where a small subset of experts receives most tokens while the remaining experts are underutilized—several mechanisms have been proposed \cite{fedus2022_switch,zoph2022_stmoe,nguyen2025selective}. The most widely adopted approach is to incorporate an auxiliary load-balancing loss into the training objective \cite{fedus2022_switch,lepikhin2020_gshard}. Specifically, given a sequence of length $T$, the auxiliary loss is defined as follows:
\begin{align*}
\mathcal{L}_{\text{Balance}}= \alpha \sum_{j=1}^{N} f_j P_j, 
\end{align*}
where $P_j = \frac{1}{T} \sum_{i=1}^{T} \text{softmax}(s_{i:})_j$ and 
\begin{align*}
f_j = \frac{n}{k m} \sum_{i=1}^{m} \mathbf{1}(\text{token } i \text{ selects expert } j).
\end{align*}



\section{Methodology}

This section introduces Mixture-of-Control (MoC), a generalized state-based fine-tuning framework for efficient cross-block communication. Unlike methods requiring $>2\times$ forward passes through pretrained blocks, MoC uses lightweight control states to enable information transfer with negligible compute overhead, making it practical for large models and complex architectures.

\subsection{Generalized State-Based Tuning: Retaining MoLEx Representational Power}
\label{sec:4.1}
We begin from the following generalized state-based update:
\begin{align}
\label{general_state_based}
    x_{t+1} = x_t + f_t(W_t x_t) + \alpha \, g_t(x_t) + (1-\alpha) \, g_{h_t}(x_t),
\end{align}
where $W_t$ denotes the pretrained weight of block $t$, $g_t$ is its local low-rank control, $g_{h_t}$ is a control chosen from another block $h_t$, and $\alpha \in [0,1]$ balances local and cross-layer information. Standard state-based tuning is recovered when $h_t=t$, while $h_t\neq t$ enables cross-block communication through lightweight low-rank controls rather than full-block reuse.  

The adoption of low-rank controls is motivated by \cite{zhangweight}, which shows that they efficiently adapt block representations as an alternative to Weight-FT. In constrast to MoLEx~\cite{teo2025molex}, which relies on pretrained block reuse under Weight-FT, our method facilitates lightweight cross-block communication while substantially reducing training time and memory overhead. This design choice is further supported by \cite{zeng2023expressive}, which demonstrates that sufficiently expressive LoRA adapters can approximate arbitrary target functions.

We analyze linear and nonlinear smooth $f_t$. Generalized state-based fine-tuning retains MoLEx expressivity: it is exact for linear $f_t$ (under mild conditions) and bounded-approximate for nonlinear $f_t$. We present the linear proof for $f_t(x)=x$; general linear maps follow directly.

\begin{theorem}[MoLEx Expressivity via State Controls]
\label{prop4.1:Expressive_moc}
Consider the MoLEx update over $L$ blocks defined by
\begin{align}
    z_{t+1}
    &= z_t + \alpha f_t\!\left((W_t+\Delta W_t')z_t\right) \notag \\
    &+ (1-\alpha) f_{h_t}\!\left((W_{h_t}+\Delta W_{h_t}')z_t\right),
\end{align}
where $z_t$ is the output of the $t$-th block and the generalized state-based update defined in \cref{general_state_based}. Assuming that $\alpha>\tfrac{1}{2}$ and $z_1=x_1$, then there exist controls $g_k:\mathbb{R}^d\to\mathbb{R}^d$ such that: when $f_k$ is linear we have $z_t=x_t$ for all $t\in\{1,\ldots,L+1\}$; moreover, when $f_k$ is smooth nonlinear under small perturbations, i.e., $\forall t$, $\|\Delta W'_t\|<\xi$ ($\xi>0$ and sufficiently small), then $\mathbb{E}\!\left[\|z_t-x_t\|^2\right]$ is bounded and vanishes as $\xi \to 0$.
\end{theorem}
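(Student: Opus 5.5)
The plan is to argue by induction on $t$, choosing the controls $g_k$ so that the two recursions produce the same increment whenever they start from the same state. Write the state-based update \cref{general_state_based} as $x_{t+1}=x_t+f_t(W_tx_t)+\alpha g_t(x_t)+(1-\alpha)g_{h_t}(x_t)$. When $z_t=x_t$, the MoLEx increment minus the state-based increment is
\begin{align*}
D_t(x) ={}& \alpha f_t\!\left((W_t+\Delta W_t')x\right)+(1-\alpha)f_{h_t}\!\left((W_{h_t}+\Delta W_{h_t}')x\right) \\
&-f_t(W_tx)-\alpha g_t(x)-(1-\alpha)g_{h_t}(x).
\end{align*}
The induction step needs $D_t\equiv 0$. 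The base case is the hypothesis $z_1=x_1$.

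\textbf{Linear case.} Take $f_k=\mathrm{id}$; general linear maps only add fixed left factors. Parametrize $g_k(x)=C_kx$. Then $D_t\equiv0$ becomes the linear matrix equation
\begin{align*}
\alpha C_t+(1-\alpha)C_{h_t}=\alpha\Delta W_t'+(1-\alpha)\bigl(W_{h_t}+\Delta W_{h_t}'\bigr)-(1-\alpha)W_t=:M_t, \qquad t=1,\dots,L.
\end{align*}
Stack the unknowns as $C=(C_1,\dots,C_L)$. The system reads $(\alpha I+(1-\alpha)P)C=M$, where $P$ is the $L\times L$ routing matrix with $P_{t,h_t}=1$, tensored with the identity on $d\times d$ blocks. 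Each row of $P$ has exactly one $1$, so $\|P\|_\infty=1$ and the spectral radius of $P$ is at most $1$. Since $\alpha>\tfrac12$, we have $\|(1-\alpha)P/\alpha\|_\infty<1$, so $\alpha I+(1-\alpha)P$ is invertible by a Neumann series. This is exactly where the hypothesis $\alpha>\tfrac12$ is used. Solving gives controls with $D_t\equiv0$ for every $t$, and induction yields $z_t=x_t$ for all $t\le L+1$. If $h_t=t$, the equation for $C_t$ decouples directly.

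\textbf{Nonlinear smooth case.} Use the same ansatz, but linearize. Taylor-expand $f_t((W_t+\Delta W_t')x)=f_t(W_tx)+J_{f_t}(W_tx)\Delta W_t'x+R_t(x)$, with $\|R_t(x)\|\le \tfrac{\beta}{2}\xi^2\|x\|^2$ for a smoothness (Hessian) bound $\beta$. Choose the $g_k$ by solving the same invertible linear system as above, with the right-hand side replaced by the first-order terms. The unmatched part is then a residual $\|D_t(x)\|\le c_1\xi\|x\|+c_2\xi^2\|x\|^2$. This covers the Jacobian-weighted mismatch, since the $g_k$ are state-independent linear maps and cannot exactly match a state-dependent Jacobian, plus the second-order remainder. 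The cross term $(1-\alpha)(f_{h_t}(W_{h_t}x)-f_t(W_tx))$ is absorbed by allowing the $g_k$ to be general nonlinear maps: solve $(\alpha I+(1-\alpha)P)g=M(\cdot)$ pointwise in $x$, exactly as in the linear case. In fact the equation is then solvable pointwise for every $x$ when the $g_k$ range over arbitrary functions. If the controls are restricted to low-rank linear maps, the error comes from projection.

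\textbf{Error propagation (main obstacle).} Let $e_t=z_t-x_t$. Then $e_{t+1}=e_t+[\Phi_t(z_t)-\Phi_t(x_t)]+D_t(x_t)$, where $\Phi_t$ is the MoLEx increment map. If $f_k$ is Lipschitz with constant $\ell$, then $\Phi_t$ is Lipschitz with constant $\ell\max_k\|W_k+\Delta W_k'\|$. A discrete Grönwall argument then gives
\begin{align*}
\|e_{t+1}\|\le (1+\Lambda)\|e_t\|+c_1\xi\|x_t\|+c_2\xi^2\|x_t\|^2 .
\end{align*}
Squaring, taking expectations over the input distribution, and assuming finite fourth moments of the states (which follow from finite moments of $x_1$ and Lipschitz growth), we get $\mathbb{E}\|e_t\|^2\le C(L,\Lambda)\,(\xi^2+\xi^4)$. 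This is bounded and tends to $0$ as $\xi\to0$.

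The delicate point is the moment control through depth, together with making explicit which class of $g_k$ is allowed. I would state the assumptions (Lipschitz $f_k$ with bounded Hessian, finite fourth moment of $x_1$) to make the expectation rigorous.
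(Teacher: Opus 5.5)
Your argument is correct, and your linear case matches the paper's. It uses the same system $\alpha C_t+(1-\alpha)C_{h_t}=M_t$, and your $\|P\|_\infty=1$ Neumann-series step is the paper's max-component contradiction in different words.

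The routes diverge in the nonlinear case. The paper keeps linear controls $\Delta W_k$ taken from the same linear system (first-order terms, identity Jacobians). It bounds the one-step error \emph{assuming} $z_t=x_t$, and never propagates the error through depth. It handles the $\xi$-independent term $(1-\alpha)[f_{h_t}(W_{h_t}x_t)-f_t(W_tx_t)]$ only through an added ``pretrained mixing closeness'' hypothesis. As a result its final bound $C\bigl(|1-\alpha|^2(\gamma^2+\beta^2/B)+\xi^2\bigr)$ vanishes only if also $\gamma,\beta\to0$ or $\alpha\to1$.

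You instead use the freedom $g_k:\mathbb{R}^d\to\mathbb{R}^d$ that the statement grants. Because $\alpha I+(1-\alpha)P$ does not depend on $x$, you can absorb the mixing term pointwise, so no closeness assumption is needed. Your discrete Gr\"onwall bound $C(\xi^2+\xi^4)$ then vanishes with $\xi$ alone, exactly as the statement claims.

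What each route buys: the paper's gives linear controls closer to the implemented $g_k(x)=\Delta W_kx$, though even its linear-case solution is not low-rank, since $M_t$ contains $W_{h_t}-W_t$. Yours gives fidelity to the statement and a genuine multi-step error analysis.

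You should, however, commit to one construction. Solving
\[
\alpha g_t(x)+(1-\alpha)g_{h_t}(x)=\alpha f_t((W_t+\Delta W_t')x)+(1-\alpha)f_{h_t}((W_{h_t}+\Delta W_{h_t}')x)-f_t(W_tx)
\]
pointwise already gives $D_t\equiv0$. Hence $z_t=x_t$ exactly, with no smallness of $\xi$, no Taylor expansion and no moment assumptions. That makes your hybrid construction plus Gr\"onwall redundant. Your phrase ``state-independent linear maps'' also clashes with allowing nonlinear $g_k$ a sentence later. Conversely, if you restrict to linear controls, the mixing term cannot be absorbed, and vanishing as $\xi\to0$ fails without an assumption like the paper's.
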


This theorem indicates that, under mild assumptions, generalized state-based updates can preserve MoLEx-induced representations through state controls. Motivated by this expressivity guarantee, we next develop an adaptive state-based framework that enables inter-block communication.

\begin{figure*}
\vspace{-0.1in}    
    \centering
    \includegraphics[width=1.05\linewidth]{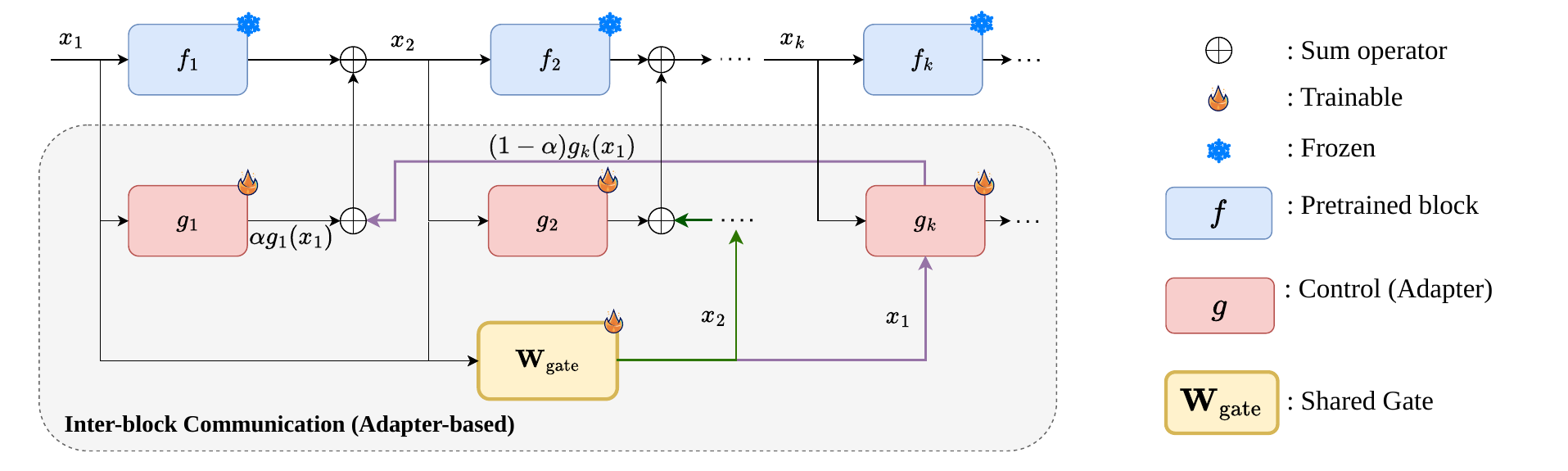}
    \caption{Overview of our Mixture-of-Control (MoC): For an arbitrary input $x$ at the block $i$, the pretrained block $f_i$ and the local control block $g_i$ process the representation in parallel. A shared gate selects the Top-$K$ control block from all layers, and the selected controller is fused with the local one to form the block control signal. The figure illustrates this mechanism using $x_1$ and later $x_2$ as the input under Top-$K$=1.}
    \label{fig:moc_architecture}

\vspace{-0.15in}    
\end{figure*}
\subsection{Mixture-of-Control (MoC) for State-based Fine-Tuning}
\label{sec4.2:moc}

As discussed in \cref{sec:4.1}, inter-block communication can be implemented via low-rank control modules instead of reusable pretrained blocks, substantially reducing computation. We propose Mixture-of-Control (MoC), a lightweight, SMoE-inspired fine-tuning method that captures global information in a depth-adaptive manner. MoC treats the control modules $\{g_k\}_{k=1}^L$ as experts and employs a shared gating matrix $\mathbf{W}_{\text{gate}}$ across layers. At each block, the input selects Top-$K$ experts to produce a global control signal; this signal is then combined with local control and added to the pretrained block output before passing to the subsequent block (see \cref{fig:moc_architecture}).

Formally, MoC coordinates local and global control signals adaptively for efficient inter-block information exchange, instantiating \cref{general_state_based} as:
\begin{equation}\label{moc}
x_{t+1} = x_t + f_t(W_t x_t) + \alpha g_t(x_t)  
+ (1-\alpha)\sum_{k=1}^L \pi_k(x_t)g_k(x_t),
\end{equation}
where 
\begin{equation}\label{eq:routing-weight}
\pi_k(x_t)=\operatorname{softmax}(\operatorname{TopK}(\mathbf{W}_{\text{gate}}x_t))_k
\end{equation}
denotes the routing weight for control expert $k$, $L$ is the number of control sites, and $\mathbf{W}_{\text{gate}}$ is the shared routing matrix. The coefficient $\alpha$ balances local token-specific information with global information injected by routed control experts. In our experiments, we use top-$K$ routing with $K=1$, with ablations on $K$ in \cref{main:ablation}. To improve training efficiency for Transformer models, we employ \emph{batch-level routing}, in which token-level routing scores are aggregated using either the mean or the mode. During autoregressive inference, each sample is routed independently based solely on its available prefix. Additional routing details are provided in \cref{appendix:exp_details} of Appendix.


MoC extends state-based fine-tuning by combining block-local controls with routed control experts for lightweight inter-block exchange. For Transformers, we study four variants: \textbf{MoC (FFN)} routes controls in FFN blocks, and \textbf{MoC (ATTN)} in attention blocks. For finer control, \textbf{MoC (ATTN+FFN, w/o sharing)} uses separate gates for attention and FFN, while \textbf{MoC (ATTN+FFN)} shares a single gate to coordinate both. Unless noted otherwise, \textbf{MoC} refers to \textbf{MoC (ATTN+FFN)}.

%

\textit{Sharing a gate across attention and FFN follows fine-grained SMoE:} granular routing improves expert specialization and utilization without additional cost \cite{krajewski2024scaling, lo2025closer}. In MoC, a shared gate jointly coordinates both pathways for finer adaptation at similar overhead. Overall, MoC enables depth-adaptive Top-$K$ information exchange, increases capacity under a frozen backbone, and remains a lightweight, architecture-agnostic drop-in for standard fine-tuning.

We rigorously evaluate MoC through a theoretical analysis of \textit{stability}, \textit{gradient flow}, and \textit{representation error}, and compare the results to standard Parallel Control.


\subsection{Theoretical Analysis}
\label{sec4.3:theoretical_analysis}
\paragraph{Stability Analysis.}
We first examine forward stability, since controlling the growth of hidden states is a prerequisite for stable optimization.

\begin{proposition}[Stability with Top-$K$ Routing]
\label{thm:moc_stability}
Consider a network with $L$ control sites, where each control is a low-rank projection $\Delta W_k = B_kA_k$ with $\|B_k\|\leq \beta_k$ and $\|A_k\|\leq \gamma$. Let the router select a top-$K$ index set $S_K(x_t)$ with weights $\pi_k(x_t)$ given in \cref{eq:routing-weight}. Define the effective routed magnitude as $
\beta_K(x_t)=\sum_{k\in S_K(x_t)}\pi_k(x_t)\beta_k
$. 

Let $\{x_t^c\}_{t=1}^{L+1}$ and $\{x_t\}_{t=1}^{L+1}$ denote the representations produced by Parallel Control and MoC, respectively. Then for each $t, 
\|x_{t+1}^c\|
\leq
(1+L_f+\beta_t\gamma)\|x_t^c\|$,
whereas MoC with mixing coefficient $\alpha\in[0,1]$ satisfies
$\|x_{t+1}\|
\leq
\bigl(1+L_f+\alpha\beta_t\gamma+(1-\alpha)\beta_K(x_t)\gamma\bigr)\|x_t\|,
$ where $L_f$ bounds the Lipschitz contribution of the frozen dynamics. Moreover, if $x_1=x_1^c$ and the router selects experts such that $\mathbb{E}[\beta_K(x_t) | x_t]<\beta_t$, then MoC admits a strictly
tighter expected one-step growth factor:
\[
1+L_f+\bigl[\alpha\beta_t+(1-\alpha)\mathbb{E}[\beta_K(x_t) | x_t]\bigr]\gamma
<
1+L_f+\beta_t\gamma.
\]
Thus, effective routing \textbf{yields more stable forward dynamics} by replacing a fixed local control norm with a data-dependent routed norm.
\end{proposition}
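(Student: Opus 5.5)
The proof is a direct application of the triangle inequality and submultiplicativity of operator norms. Each inequality can be obtained one step at a time, without any induction over depth.

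\textbf{Parallel Control bound.} We interpret the assumption on the frozen dynamics as $\|f_t(W_tx)\|\le L_f\|x\|$ for all $x$. This holds, for instance, when $f_t$ is $L_f'$-Lipschitz with $f_t(0)=0$ and $L_f=L_f'\|W_t\|$; it covers ReLU and GeLU. Starting from $x_{t+1}^c=x_t^c+f_t(W_tx_t^c)+B_tA_tx_t^c$, the triangle inequality gives
$\|x_{t+1}^c\|\le \|x_t^c\|+L_f\|x_t^c\|+\|B_t\|\|A_t\|\|x_t^c\|$.
Using $\|B_t\|\le\beta_t$ and $\|A_t\|\le\gamma$ then yields $(1+L_f+\beta_t\gamma)\|x_t^c\|$.

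\textbf{MoC bound.} We apply the same expansion to \cref{moc}. The local term contributes $\alpha\beta_t\gamma\|x_t\|$. For the routed term, $\pi_k(x_t)=0$ outside $S_K(x_t)$, and the remaining weights are nonnegative softmax weights. Therefore
$\bigl\|\sum_k\pi_k(x_t)B_kA_kx_t\bigr\|\le\sum_{k\in S_K(x_t)}\pi_k(x_t)\beta_k\gamma\|x_t\|=\beta_K(x_t)\gamma\|x_t\|$.
Multiplying by $(1-\alpha)\ge0$ and adding the pieces gives the stated factor $1+L_f+\alpha\beta_t\gamma+(1-\alpha)\beta_K(x_t)\gamma$. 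This argument only uses the fact that the routing weights are nonnegative; they need not be normalized. Normalization is what makes $\beta_K$ a convex combination of the $\beta_k$.

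\textbf{Expected comparison.} We take the conditional expectation of the MoC factor given $x_t$. The quantities $\alpha$, $\beta_t$, $\gamma$ and $L_f$ are deterministic, so linearity gives $1+L_f+[\alpha\beta_t+(1-\alpha)\mathbb{E}[\beta_K(x_t)\mid x_t]]\gamma$. Subtracting this from the Parallel Control factor leaves $(1-\alpha)\gamma\bigl(\beta_t-\mathbb{E}[\beta_K\mid x_t]\bigr)$. This difference is strictly positive under the hypothesis $\mathbb{E}[\beta_K\mid x_t]<\beta_t$ together with $\alpha<1$ and $\gamma>0$.

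\textbf{Main obstacle.} The only delicate point is the edge cases of strictness. When $\alpha=1$ or $\gamma=0$, the two factors coincide, so strictness fails. I would handle this by noting that the strict inequality requires $\alpha\in[0,1)$ and $\gamma>0$, or by assuming these implicitly as the non-degenerate case. The condition $x_1=x_1^c$ is used only to interpret the comparison as starting from a common state. Unrolling the one-step bounds then gives product bounds over depth, if desired.
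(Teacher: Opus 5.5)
Your proposal is correct and follows the paper's proof essentially step for step: the triangle inequality with $\|f_t(W_tx)\|\le L_f\|x\|$ and $\|B_kA_k\|\le\beta_k\gamma$, the weighted-sum bound on the routed term, and linearity of expectation to compare the two growth factors. Your observation that the strict inequality also needs $\alpha<1$ and $\gamma>0$ is a valid refinement that the paper's argument does not address: at $\alpha=1$, which the stated range $\alpha\in[0,1]$ allows, the two factors are equal.
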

The key quantity in this result is the routed norm $\beta_K(x_t)$. We enforce $\mathbb{E}[\beta_K(x_t)\vert x_t]<\beta_t$ by using a load-balancing loss \cite{fedus2022_switch, lepikhin2020_gshard}, promoting balanced expert use and better control-expert selection.
\paragraph{Gradient Flow.}
A tighter bound on the expected forward growth controls the layerwise Jacobian norms in the backpropagated product, thereby improving gradient stability as depth increases.
\begin{corollary}[Improved Gradient Flow]
\label{cor:gradient}
For each time step $t$, denote by
$J_t = \frac{\partial x_{t+1}}{\partial x_t}$ and $J_t^c = \frac{\partial x^c_{t+1}}{\partial x^c_t}$
the Jacobians of the forward dynamics for the MoC and Parallel Control path, respectively. Under \cref{thm:moc_stability}'s assumptions, MoC admits a tighter expected Jacobian norm upper bound than its Parallel Control counterpart.
In particular, $\forall t$, we have $\|J_t^c\| \leq M_t^c \text{ and } \|J_t\| \leq M_t$,
where
$M_t = 1 + L_f\|W_t\| + \alpha\beta_t\gamma + (1-\alpha)\beta_K(x_t)\gamma$ and $M_t^c = 1 + L_f\|W_t\| +\beta_t\gamma$.
Moreover, $ M_t \leq M_t^c, \forall t$, and consequently,
\begin{align}
\prod_{t=1}^{L} \|J_t\| \leq \prod_{t=1}^{L} M_t \leq \prod_{t=1}^{L}M_t^c,
\end{align}
indicating a \textbf{more stable gradient flow} under backpropagation in the MoC path.
\end{corollary}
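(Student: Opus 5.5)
We will differentiate the MoC recursion \cref{moc} and the Parallel Control recursion $x^c_{t+1}=x^c_t+f_t(W_tx^c_t)+B_tA_tx^c_t$ term by term, and bound each Jacobian by the triangle inequality. For Parallel Control,
\[
J_t^c = I + Df_t(W_tx_t^c)\,W_t + B_tA_t .
\]
Under the Lipschitz assumption on the frozen dynamics implicit in \cref{thm:moc_stability}, i.e.\ $\|Df_t\|\le L_f$, this gives $\|J^c_t\|\le 1+L_f\|W_t\|+\beta_t\gamma=M^c_t$ from submultiplicativity and $\|B_t\|\le\beta_t$, $\|A_t\|\le\gamma$.

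For MoC the Jacobian is
\[
J_t = I + Df_t(W_tx_t)W_t + \alpha B_tA_t + (1-\alpha)\sum_{k\in S_K(x_t)}\pi_k(x_t)B_kA_k + R_t ,
\]
where $R_t=(1-\alpha)\sum_k g_k(x_t)\,\nabla\pi_k(x_t)^\top$ collects the derivatives of the routing weights. The first step is to dispose of $R_t$. Top-$K$ selection is piecewise constant in $x_t$. Under the batch-level routing of \cref{sec4.2:moc}, gate scores are aggregated and the gate is treated as a routing decision, so we will adopt the standard convention that the routing weights are held fixed, with a stop-gradient or a locally constant selection set, when forming $\partial x_{t+1}/\partial x_t$. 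Then $R_t=0$. This is the main obstacle: without this convention, $\|R_t\|$ adds a term of order $(1-\alpha)\|\mathbf W_{\text{gate}}\|\max_k\beta_k\gamma\|x_t\|$ that breaks the clean form of $M_t$. I will state the convention explicitly as part of ``\cref{thm:moc_stability}'s assumptions''. Alternatively, one can note that the bound holds almost everywhere on the regions where $S_K$ is constant, after absorbing the softmax derivative into a redefined $L_f$.

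With $R_t=0$, the triangle inequality and $\sum_{k\in S_K}\pi_k=1$ give
\[
\|J_t\|\le 1+L_f\|W_t\|+\alpha\beta_t\gamma+(1-\alpha)\beta_K(x_t)\gamma=M_t .
\]

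For the comparison $M_t\le M_t^c$, note that $M^c_t-M_t=(1-\alpha)\gamma\,(\beta_t-\beta_K(x_t))$. This is nonnegative whenever $\beta_K(x_t)\le\beta_t$, and in expectation it is nonnegative by the hypothesis $\mathbb E[\beta_K(x_t)\mid x_t]<\beta_t$ of \cref{thm:moc_stability}. The corollary is phrased in terms of expected bounds, so I will read $M_t\le M^c_t$ either with $\beta_K$ replaced by its conditional expectation, or under the pointwise routing condition.

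The product inequality follows because all factors are nonnegative. We have $\prod\|J_t\|\le\prod M_t$ factorwise, and then $\prod M_t\le\prod M^c_t$ by multiplying the inequalities $0\le M_t\le M^c_t$. In the expected version, the routing at different depths is coupled through $x_t$, so I would either assume the pointwise condition or apply the tower property step by step backward in $t$. At each step I would use the fact that $M^c_t$ is deterministic, so that conditioning on $x_t$ peels off one factor at a time.
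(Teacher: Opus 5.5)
Your proposal is correct and follows the paper's proof essentially step for step. Like the paper, you differentiate the MoC update and discard the routing-derivative term by treating the routing weights as locally constant; the paper does this as a ``for simplicity'' approximation, and you make it explicit as a stop-gradient convention. You then bound the rest via the triangle inequality, $\|Df_t\|\le L_f$, $\|B_kA_k\|\le\beta_k\gamma$ and $\sum_{k\in S_K}\pi_k=1$, and obtain $M_t\le M_t^c$ from $\beta_K(x_t)\le\beta_t$. One simplification: $\beta_K(x_t)$ is a function of $x_t$, so the hypothesis $\mathbb{E}[\beta_K(x_t)\mid x_t]<\beta_t$ already \emph{is} the pointwise condition $\beta_K(x_t)<\beta_t$, and the tower-property detour for an ``expected version'' of the product bound is unnecessary (it would collapse to the pointwise condition anyway, since each $M_t$ is determined by the preceding states).
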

\paragraph{Representation Error.}
Finally, we link adaptive routing to representation quality: by routing and combining multiple control experts, MoC yields a closer approximation to the ideally adaptive control policy than block-local control alone.
\begin{proposition}[Representation Error Bound]
\label{lem:representation_error}
Consider a network comprising $L$ blocks. Assume $\Delta W^* x$ is the optimal control signal that yields task-optimal representations. Denote selected experts exhibit an average error of
$\bar{\epsilon}_K^2 \coloneqq \mathbb{E}[\|\sum_{k\in S_K(x)} \pi_k(x) \Delta W_k x - \Delta W^*x\|^2 \vert x]$, $\bar{\epsilon}_{c}^2 \coloneqq \mathbb{E}\left[\|\Delta W_tx - \Delta W^*x\|^2 \vert x\right]$ and $\bar{\epsilon}^2 \coloneqq \mathbb{E}\left[\|g_{\text{moc}}(x) - \Delta W^*x\|^2|x\right]$ are the representation errors of Parallel Control and MoC, respectively. Then the representation error of MoC can be bounded as:
\begin{equation}
\bar{\epsilon}^2 \leq \alpha \bar{\epsilon}_{c}^2 + (1-\alpha)\bar{\epsilon}_K^2.
\end{equation}
Furthermore, under \cref{thm:moc_stability}'s assumptions, $\bar{\epsilon}_K^2 < \bar{\epsilon}_{c}^2$, which leads to 
$\bar{\epsilon}^2 < \bar{\epsilon}_{c}^2$.
\end{proposition}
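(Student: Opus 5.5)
The plan is to write the MoC control signal at block $t$ as a convex combination of the local and routed controls, $g_{\text{moc}}(x) = \alpha \Delta W_t x + (1-\alpha)\sum_{k\in S_K(x)}\pi_k(x)\Delta W_k x$, as in \cref{moc} with $g_k(x)=\Delta W_k x$. Since the routing weights $\pi_k(x)$ are a softmax over $S_K(x)$, they sum to one. Using $\alpha + (1-\alpha) = 1$, the error then splits as
\begin{equation*}
g_{\text{moc}}(x) - \Delta W^* x = \alpha\bigl(\Delta W_t x - \Delta W^* x\bigr) + (1-\alpha)\Bigl(\sum_{k\in S_K(x)}\pi_k(x)\Delta W_k x - \Delta W^* x\Bigr).
\end{equation*}
This writes the MoC error as a convex combination of the Parallel Control error vector $e_c(x)$ and the routed error vector $e_K(x)$.

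Next I apply convexity of $v\mapsto\|v\|^2$ (Jensen's inequality with weights $\alpha,1-\alpha\in[0,1]$), which gives the pointwise bound
\begin{equation*}
\|g_{\text{moc}}(x)-\Delta W^*x\|^2 \le \alpha\|e_c(x)\|^2 + (1-\alpha)\|e_K(x)\|^2.
\end{equation*}
The same bound follows from the identity $\|\alpha a+(1-\alpha)b\|^2 = \alpha\|a\|^2+(1-\alpha)\|b\|^2-\alpha(1-\alpha)\|a-b\|^2$, which also shows the gap explicitly. Taking conditional expectations given $x$ and using linearity yields $\bar\epsilon^2\le\alpha\bar\epsilon_c^2+(1-\alpha)\bar\epsilon_K^2$. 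This first part is routine and needs no assumption beyond $\alpha\in[0,1]$.

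For the strict inequality, once $\bar\epsilon_K^2<\bar\epsilon_c^2$ is established and $\alpha<1$, we get $\alpha\bar\epsilon_c^2+(1-\alpha)\bar\epsilon_K^2<\bar\epsilon_c^2$ and hence $\bar\epsilon^2<\bar\epsilon_c^2$. If $\alpha=1$, MoC coincides with Parallel Control and the inequality is only weak, so I will state that $\alpha<1$ is needed.

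The main obstacle is deriving $\bar\epsilon_K^2<\bar\epsilon_c^2$ from the assumptions of \cref{thm:moc_stability}. Those assumptions only concern norm bounds, namely $\mathbb{E}[\beta_K(x)\mid x]<\beta_t$, and norms alone do not control the distance to $\Delta W^*x$. My first attempt is to interpret $\Delta W_t$ as a feasible but not optimal choice within the routed family. The router, trained jointly with the load-balancing loss, then selects experts that minimize the expected discrepancy to the target, so $\Delta W_t$ is among the candidates and the routed mixture can do no worse. This gives $\bar\epsilon_K^2\le\bar\epsilon_c^2$, with strictness whenever some other expert in $S_K(x)$ reduces the error on a set of positive probability. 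Failing that, I would make the assumption explicit: the routed experts provide a strictly better approximation in expectation. I would frame this as the quantitative content of ``effective routing'' behind \cref{thm:moc_stability}, and present it as a modeling assumption rather than a consequence of the norm bounds alone.
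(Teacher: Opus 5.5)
Your proof of the first inequality is the paper's argument. The paper also writes the MoC error as $\alpha e_c(x)+(1-\alpha)e_K(x)$ using $\sum_{k\in S_K(x)}\pi_k(x)=1$, applies Jensen with weights $\alpha,1-\alpha$, and takes expectations.

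The paper's proof ends there. It never derives $\bar\epsilon_K^2<\bar\epsilon_c^2$ from the assumptions of \cref{thm:moc_stability}. You are right that norm bounds cannot control the distance to $\Delta W^*x$, and in fact the ``Furthermore'' claim is false under those assumptions alone. Take $K=1$ and a router that always selects some $j\neq t$ with $\beta_j<\beta_t$, so that $\mathbb{E}[\beta_K(x)\mid x]=\beta_j<\beta_t$. Suppose the local control happens to be optimal, $\Delta W^*=\Delta W_t$, while $\Delta W_jx\neq\Delta W_tx$. Then $e_c\equiv 0$ and $e_K\neq 0$, so $\bar\epsilon_c^2=0<\bar\epsilon_K^2$, and $\bar\epsilon^2=(1-\alpha)^2\bar\epsilon_K^2>\bar\epsilon_c^2$ for every $\alpha<1$.

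Your ``first attempt'' should therefore be discarded, for three reasons:
\begin{itemize}
\item The gate scores $\mathbf{W}_{\text{gate}}x$ are trained on the task loss plus a load-balancing term that pushes toward uniform expert usage. Nothing trains them to minimize $\|\cdot-\Delta W^*x\|$.
\item Nothing guarantees $t\in S_K(x)$.
\item Even when $t\in S_K(x)$, a softmax-weighted mixture with the other selected experts can be farther from $\Delta W^*x$ than $\Delta W_tx$ alone.
\end{itemize}
So that argument yields neither $\le$ nor $<$.

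Your fallback is the right formulation: state the strict improvement of the routed mixture as an explicit hypothesis and require $\alpha<1$. The $\alpha<1$ caveat is a genuine correction, since the paper allows $\alpha\in[0,1]$ and at $\alpha=1$ only equality holds.

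Your exact identity also gives something the paper's Jensen step does not. Taking expectations yields
\[
\bar\epsilon^2=\alpha\bar\epsilon_c^2+(1-\alpha)\bar\epsilon_K^2-\alpha(1-\alpha)\,\mathbb{E}\bigl[\|e_c(x)-e_K(x)\|^2\bigr].
\]
Hence for $\alpha<1$ one has $\bar\epsilon^2<\bar\epsilon_c^2$ if and only if $\bar\epsilon_K^2<\bar\epsilon_c^2+\alpha\,\mathbb{E}\bigl[\|e_c(x)-e_K(x)\|^2\bigr]$. This is an exact characterization, and it is weaker than assuming $\bar\epsilon_K^2<\bar\epsilon_c^2$: the routed mixture alone may be somewhat worse than the local control, provided the two error vectors differ enough.
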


These results show a connected benefit of sparse state-control routing: effective routing tightens forward stability bounds, which improves gradient propagation, while the same routed mixture reduces the gap to the ideal representation. The detailed proofs of \cref{prop4.1:Expressive_moc}, \cref{thm:moc_stability}, \cref{cor:gradient}, and \cref{lem:representation_error} are provided in Appendix (see \cref{Appendix:Proof}).

\begin{table*}[!t]
\centering
\caption{Comparison of algorithm performance across commonsense reasoning benchmarks. * marks previously published numbers. Best results are in \textbf{bold}.}
\vspace{0.5em}
\label{tab:main_llm}

\renewcommand\tabcolsep{5pt}
\renewcommand\arraystretch{1.3}

\resizebox{\linewidth}{!}{
\begin{tabular}{c|l|c|ccccccccc|cc}
\Xhline{1.2pt}
\rowcolor{cyan!20}
\textbf{Model} & \textbf{Method} & \textbf{\% Params} 
& \textbf{BoolQ} & \textbf{PIQA} & \textbf{SIQA} 
& \textbf{HellaSwag} & \textbf{WinoGrande} 
& \textbf{ARC-e} & \textbf{ARC-c} & \textbf{OBQA} 
& \textbf{Avg ($\uparrow$)}
& \makecell{\textbf{Time ($\downarrow$)}\\ \textbf{(h) }} 
& \makecell{\textbf{GPU Mem. ($\downarrow$)}\\ \textbf{(GB) }} \\
\Xhline{1.2pt}

\multirow{5}{*}{\rotatebox{90}{LLaMA2-7B}}

& LoRA* & 0.83
& 69.8 & 79.9 & 79.5 
& 83.6 & 82.6 & 79.8 
& 64.7 & 81.0 & 77.6 
& 6.1 & 48.0 \\

\cline{2-14}
\noalign{\vskip 0.1mm}

& \cellcolor{gray!10}DoRA* & \cellcolor{gray!10}0.85
& \cellcolor{gray!10}71.8\up{2.0} 
& \cellcolor{gray!10}\textbf{83.7}\up{3.8} 
& \cellcolor{gray!10}76.0\dn{3.5} 
& \cellcolor{gray!10}89.1\up{5.5} 
& \cellcolor{gray!10}82.6\up{0.0} 
& \cellcolor{gray!10}83.7\up{3.9} 
& \cellcolor{gray!10}68.2\up{3.5} 
& \cellcolor{gray!10}\textbf{82.4}\up{1.4} 
& \cellcolor{gray!10}79.7\up{2.1}
& \cellcolor{gray!10} 10.5
& \cellcolor{gray!10}80.2 \\

& MoLEx & 0.85
& 69.7\dn{0.1} 
& 81.9\up{2.0} 
& 78.4\dn{1.1} 
& 91.3\up{7.7} 
& 81.5\dn{1.1} 
& 81.5\up{1.7} 
& 65.8\up{1.1} 
& 80.8\dn{0.2} 
& 78.9\up{1.3}
& 9.8 
& 77.4 \\
& \cellcolor{gray!10}Control* & \cellcolor{gray!10}0.50
& \cellcolor{gray!10}\textbf{72.3}\up{2.5} 
& \cellcolor{gray!10}82.5\up{2.6} 
& \cellcolor{gray!10}79.2\dn{0.3} 
& \cellcolor{gray!10}89.1\up{5.5} 
& \cellcolor{gray!10}83.1\up{0.5} 
& \cellcolor{gray!10}83.0\up{3.2} 
& \cellcolor{gray!10}68.5\up{3.8} 
& \cellcolor{gray!10}79.0\dn{2.0} 
& \cellcolor{gray!10}79.6\up{2.0}
& \cellcolor{gray!10} 4.2
& \cellcolor{gray!10}42.6 \\

& \cellcolor{moccolor}\textbf{MoC} & \cellcolor{moccolor}0.51
& \cellcolor{moccolor}71.8\up{2.0} 
& \cellcolor{moccolor}83.2\up{3.3} 
& \cellcolor{moccolor}\textbf{79.6}\up{0.1} 
& \cellcolor{moccolor}\textbf{93.3}\up{9.7} 
& \cellcolor{moccolor}\textbf{84.2}\up{1.6} 
& \cellcolor{moccolor}83.6\up{3.8} 
& \cellcolor{moccolor}68.6\up{3.9} 
& \cellcolor{moccolor}80.4\dn{0.6} 
& \cellcolor{moccolor}\textbf{80.5}\up{2.9}
& \cellcolor{moccolor} 5.5
& \cellcolor{moccolor}47.6 \\
\hline

\multirow{5}{*}{\rotatebox{90}{LLaMA3-8B}}

& LoRA* & 0.71
& 70.8 & 85.2 & 79.9 
& 91.7 & 84.3 & 84.2 
& 71.2 & 79.0 & 80.8
& 7.5 & 59.1 \\

\cline{2-14}
\noalign{\vskip 0.1mm}

& \cellcolor{gray!10}DoRA* & \cellcolor{gray!10}0.71
& \cellcolor{gray!10}74.6\up{3.8} 
& \cellcolor{gray!10}89.3\up{4.1} 
& \cellcolor{gray!10}79.9\up{0.0} 
& \cellcolor{gray!10}95.5\up{3.8} 
& \cellcolor{gray!10}85.6\up{1.3} 
& \cellcolor{gray!10}90.5\up{6.3} 
& \cellcolor{gray!10}80.4\up{9.2} 
& \cellcolor{gray!10}85.8\up{6.8} 
& \cellcolor{gray!10}85.2\up{4.4}
& \cellcolor{gray!10} 10.5
& \cellcolor{gray!10}69.1 \\

& MoLEx & 0.72
& 70.1\dn{0.7} 
& 86.2\up{1.0} 
& 80.4\up{0.5} 
& 93.2\up{1.5} 
& 84.1\dn{0.2} 
& 84.8\up{0.6} 
& 71.3\up{0.1} 
& 82.2\up{3.2} 
& 81.5\up{0.7}
& 11.0 & 94.9 \\

& \cellcolor{gray!10}Control* & \cellcolor{gray!10}0.42
& \cellcolor{gray!10}74.1\up{3.3} 
& \cellcolor{gray!10}87.8\up{2.6} 
& \cellcolor{gray!10}80.7\up{0.8} 
& \cellcolor{gray!10}95.5\up{3.8} 
& \cellcolor{gray!10}86.0\up{1.7} 
& \cellcolor{gray!10}90.8\up{6.6} 
& \cellcolor{gray!10}80.0\up{8.8} 
& \cellcolor{gray!10}87.8\up{8.8} 
& \cellcolor{gray!10}85.3\up{4.5}
& \cellcolor{gray!10} 4.4 
& \cellcolor{gray!10}54.0 \\

& \cellcolor{moccolor}\textbf{MoC} & \cellcolor{moccolor}0.43
& \cellcolor{moccolor}\textbf{75.5}\up{4.7} 
& \cellcolor{moccolor}\textbf{89.5}\up{4.3} 
& \cellcolor{moccolor}\textbf{82.0}\up{2.1} 
& \cellcolor{moccolor}\textbf{96.6}\up{4.9} 
& \cellcolor{moccolor}\textbf{88.8}\up{4.5} 
& \cellcolor{moccolor}\textbf{91.5}\up{7.3} 
& \cellcolor{moccolor}\textbf{81.6}\up{10.4} 
& \cellcolor{moccolor}\textbf{88.0}\up{9.0} 
& \cellcolor{moccolor}\textbf{86.7}\up{5.9}
& \cellcolor{moccolor} 6.5
& \cellcolor{moccolor}56.7 \\
\hline

\multirow{5}{*}{\rotatebox{90}{Mistralv3-7B}}

& LoRA & 0.78
& 68.7 & 83.5 & 77.2 & 90.3 & 82.5 & 85.7 & 73.4 & 80.5 & 80.2
& 6.5 & 58.4 \\

\cline{2-14}
\noalign{\vskip 0.1mm}

& \cellcolor{gray!10}DoRA & \cellcolor{gray!10}0.79
& \cellcolor{gray!10}70.3\up{1.5} 
& \cellcolor{gray!10}84.5\up{1.0} 
& \cellcolor{gray!10}77.8\up{0.6} 
& \cellcolor{gray!10}89.9\dn{0.4} 
& \cellcolor{gray!10}83.1\up{0.5} 
& \cellcolor{gray!10}87.1\up{1.4} 
& \cellcolor{gray!10}72.2\dn{1.2} 
& \cellcolor{gray!10}82.4\up{1.9} 
& \cellcolor{gray!10}80.9\up{0.7}
& \cellcolor{gray!10} 10.1
& \cellcolor{gray!10}70.4 \\

& MoLEx & 0.80
& 71.4\up{2.7} & 85.4\up{2.0} & 78.2\up{1.0} 
& 90.3\dn{0.0} & 83.5\up{1.0} & 84.1\dn{1.6} 
& 75.0\up{1.5} & 82.7\up{2.2} & 81.3\up{1.1}
& 10.5	& 82.3 \\

& \cellcolor{gray!10}Control & \cellcolor{gray!10}0.46
& \cellcolor{gray!10}70.6\up{1.8} 
& \cellcolor{gray!10}86.9\up{3.5} 
& \cellcolor{gray!10}79.0\up{1.8} 
& \cellcolor{gray!10}94.2\up{3.9} 
& \cellcolor{gray!10}84.8\up{2.2} 
& \cellcolor{gray!10}87.9\up{2.2} 
& \cellcolor{gray!10}76.5\up{3.0} 
& \cellcolor{gray!10}84.8\up{4.3} 
& \cellcolor{gray!10}83.1\up{2.9}
& \cellcolor{gray!10} 4.3 
& \cellcolor{gray!10}53.2 \\

& \cellcolor{moccolor}\textbf{MoC} & \cellcolor{moccolor}0.48
& \cellcolor{moccolor}\textbf{74.2}\up{5.5} 
& \cellcolor{moccolor}\textbf{88.9}\up{5.4} 
& \cellcolor{moccolor}\textbf{81.0}\up{3.8} 
& \cellcolor{moccolor}\textbf{95.6}\up{5.3} 
& \cellcolor{moccolor}\textbf{87.9}\up{5.4} 
& \cellcolor{moccolor}\textbf{88.9}\up{3.2} 
& \cellcolor{moccolor}\textbf{79.5}\up{6.1} 
& \cellcolor{moccolor}\textbf{87.8}\up{7.3} 
& \cellcolor{moccolor}\textbf{85.5}\up{5.3}
& \cellcolor{moccolor} 5.5
& \cellcolor{moccolor}55.6 \\
\hline

\multirow{5}{*}{\rotatebox{90}{Qwen2.5-14B}}

& LoRA & 0.63
& 73.4 & 90.4 & 82.2 & 96.0 & 88.6 & 94.7 & 86.1 & 90.6 & 87.7
& 9.4 & 62.1 \\

\cline{2-14}
\noalign{\vskip 0.1mm}

& \cellcolor{gray!10}DoRA & \cellcolor{gray!10}0.64
& \cellcolor{gray!10}75.1\up{1.7} 
& \cellcolor{gray!10}90.1\dn{0.3} 
& \cellcolor{gray!10}82.0\dn{0.2} 
& \cellcolor{gray!10}96.3\up{0.3} 
& \cellcolor{gray!10}89.4\up{0.8} 
& \cellcolor{gray!10}95.0\up{0.3} 
& \cellcolor{gray!10}88.2\up{2.1} 
& \cellcolor{gray!10}93.0\up{2.4} 
& \cellcolor{gray!10}88.6\up{0.9}
& \cellcolor{gray!10} 14.3
& \cellcolor{gray!10}94.7 \\

& MoLEx & 0.64
& 73.5\up{0.1} 
& 90.5\up{0.1} 
& 81.9\dn{0.3} 
& 95.5\dn{0.5} 
& 88.2\dn{0.4} 
& 94.4\dn{0.3} 
& 85.8\dn{0.3} 
& 92.2\up{1.6} 
& 87.8\up{0.1}
& 20.3 & 124.8 \\

& \cellcolor{gray!10}Control & \cellcolor{gray!10}0.43
& \cellcolor{gray!10}72.8\dn{0.6} 
& \cellcolor{gray!10}88.8\dn{1.6} 
& \cellcolor{gray!10}81.3\dn{0.9} 
& \cellcolor{gray!10}94.6\dn{1.4} 
& \cellcolor{gray!10}87.5\dn{1.1} 
& \cellcolor{gray!10}94.4\dn{0.3} 
& \cellcolor{gray!10}86.9\up{0.8} 
& \cellcolor{gray!10}91.2\up{0.6} 
& \cellcolor{gray!10}87.2\dn{0.5}
& \cellcolor{gray!10}6.5
& \cellcolor{gray!10}58.7 \\

& \cellcolor{moccolor}\textbf{MoC} & \cellcolor{moccolor}0.43
& \cellcolor{moccolor}\textbf{75.5}\up{2.1} 
& \cellcolor{moccolor}\textbf{91.8}\up{1.4} 
& \cellcolor{moccolor}\textbf{82.4}\up{0.2} 
& \cellcolor{moccolor}\textbf{96.8}\up{0.8} 
& \cellcolor{moccolor}\textbf{91.0}\up{2.4} 
& \cellcolor{moccolor}\textbf{96.6}\up{1.9} 
& \cellcolor{moccolor}\textbf{90.2}\up{4.1} 
& \cellcolor{moccolor}\textbf{96.4}\up{5.8} 
& \cellcolor{moccolor}\textbf{90.1}\up{2.4}
& \cellcolor{moccolor}8.0
& \cellcolor{moccolor}61.3 \\
\Xhline{1.2pt}
\end{tabular}
}

\vspace{-0.4em}
\end{table*}
\begin{figure*}[h]
\centering    \includegraphics[width=0.9\linewidth]{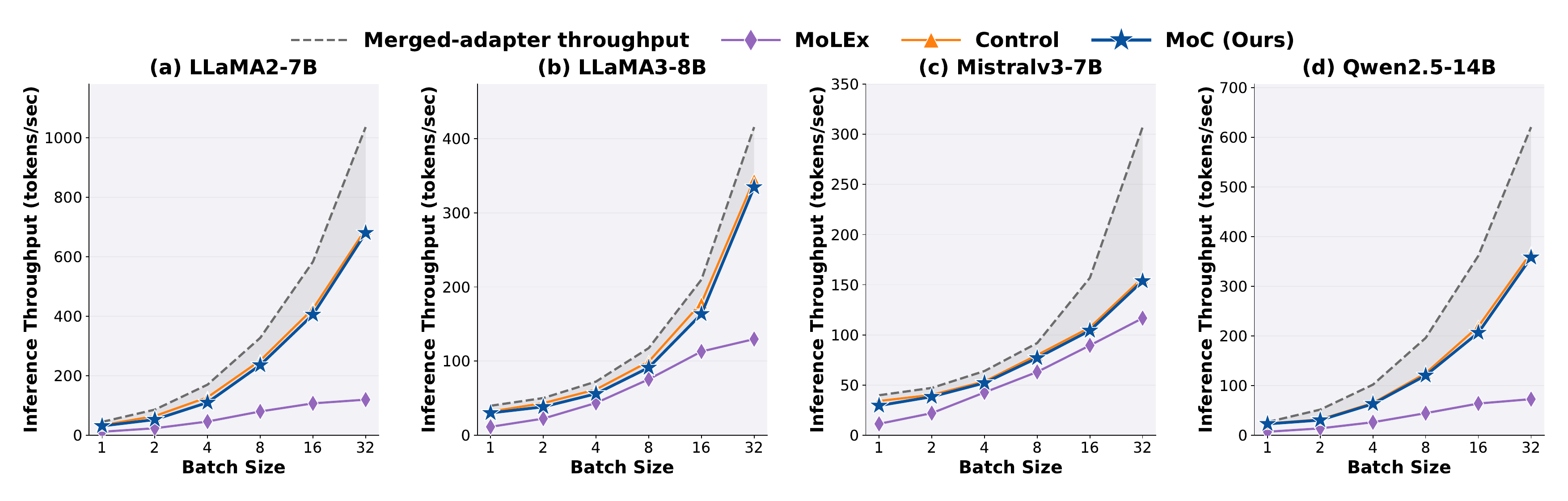}
    \caption{Inference Throughput of MoC compared to prior methods across LLMs and batch size.}
    \label{fig:inference_throughput}
\end{figure*}

\section{Experiments}
This section details the experimental setup, presents the primary results of our proposed method compared with state-of-the-art baselines, and includes an ablation study.

\subsection{Settings}

\paragraph{Datasets.} We conduct experiments on diverse language modeling benchmarks. For natural language generation (NLG), we use 8 commonsense reasoning datasets: BoolQ \cite{clark2019boolq}, PiQA \cite{bisk2020piqa}, Social IQa \cite{sap-etal-2019-social}, HellaSwag \cite{zellers2019hellaswag}, WinoGrande \cite{sakaguchi2021winogrande}, ARC-E/ARC-C \cite{clark2018think}, and OpenBookQA \cite{mihaylov2018can}. For natural language understanding (NLU), we use GLUE \cite{wang2018glue}, which includes datasets such as SST-2, MRPC,
CoLA, QNLI, RTE, and STS-B, designed to access the ability to interpret linguistic structure, semantic relationships, and contextual meaning. These benchmarks provide a comprehensive evaluation across language tasks.

\paragraph{Backbone models.} For NLG, we use LLaMA2‑7B \cite{touvron2023llama}, LLaMA3‑8B \cite{grattafiori2024llama}, Mistral‑7B Instruct \cite{jiang2023mistral7b}, and Qwen2.5‑14B Instruct \cite{qwen2025qwen25technicalreport}. For NLU, we evaluate RoBERTa and DeBERTa in both base and large variants.






\paragraph{Baselines.}
Across domains, we compare against representative adaptation methods, including LoRA \cite{hu2021loralowrankadaptationlarge}, DoRA \cite{liu2024doraweightdecomposedlowrankadaptation}, MoLEx \cite{teo2025molex}, and the recent Parallel Control \cite{zhangweight} (abbreviated as Control).
\paragraph{Evaluation Metrics and Analysis.}
For NLG tasks, we evaluate performance using accuracy for eight commonsense reasoning datasets. For NLU tasks, we follow standard evaluation protocols and report task‑specific metrics: Matthew’s correlation coefficient for CoLA, Pearson correlation for STS‑B, and accuracy for all remaining tasks. 

\subsection{Main Results}
\paragraph{NLG Tasks.} 
We evaluate MoC on NLG tasks (\cref{tab:main_llm}). MoC consistently outperforms prior methods across backbones from LLaMA2-7B to Qwen2.5-14B, beating LoRA by $2.4\%$--$6.0\%$ and the strongest baseline by $1.0\%$--$3.0\%$ with comparable compute. While MoLEx also enables cross-block communication, it costs up to $>2\times$ time and memory on Qwen2.5-14B; MoC achieves better performance with superior scalability. Beyond accuracy, \cref{fig:inference_throughput} reports inference throughput across model scales and batch sizes. While state-based methods remain less efficient than fully merged Weight-FT, MoC consistently outperforms MoLEx in throughput—even after merging—improving the accuracy–efficiency trade-off and narrowing the gap to mergeable fine-tuning.
\paragraph{NLU Tasks.}
Next, we evaluate MoC against prior approaches on the GLUE benchmark in \cref{tab:glue_all_models}. MoC consistently outperforms competing baselines across all six datasets, achieving average gains of 1.5\% and 2.0\% over LoRA for RoBERTa-Base and RoBERTa-Large, respectively, and surpassing the strongest competing baseline by 1.2\%. Similarly, MoC outperforms prior methods by 1.5\% and 1.2\% on DeBERTa-Base and DeBERTa-Large, respectively. Together, these results demonstrate the effectiveness of MoC across both NLG and NLU settings.

\begin{figure*}[t]
    \centering
    \includegraphics[width=\linewidth]{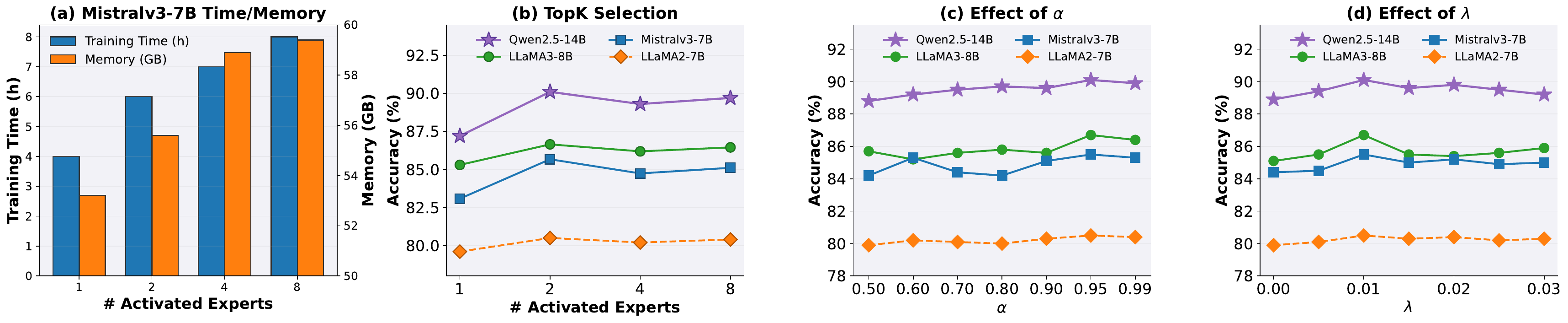}
\caption{
Ablation analysis of MoC under different configurations. 
(a) Training-time and GPU-memory overhead as the number of activated experts varies, 
reported on Mistralv3-7B. 
(b--d) Performance analysis of MoC across different LLM backbones, including 
(b) TopK Selection, 
(c) the effect of the coefficient $\alpha$, 
and (d) the effect of the coefficient $\lambda$.
}    \label{fig:ablation_topk_alpha_lambda}
\end{figure*}

\begin{table*}[t]
\centering
\caption{GLUE results across RoBERTa and DeBERTa backbones. Values are medians over five runs; * denotes previously published results. Best results are in \textbf{bold}. Superscripts $\uparrow$ and $\downarrow$ indicate improvement and degradation relative to LoRA.}
\label{tab:glue_all_models}

\renewcommand\tabcolsep{10pt}
\renewcommand\arraystretch{1.3}

\resizebox{0.8\textwidth}{!}{
\begin{tabular}{c|l|ccccccc}
\Xhline{1.2pt}
\rowcolor{cyan!20}
\textbf{Model} & \textbf{Method} 
& \textbf{SST-2} & \textbf{MRPC} & \textbf{CoLA} 
& \textbf{QNLI} & \textbf{RTE} & \textbf{STS-B} 
& \textbf{Avg} ($\uparrow$) \\
\Xhline{1.2pt}

\multirow{5}{*}{RoBERTa-Base}

& LoRA* 
& 95.1 & 89.7 & 63.4 & 93.3 & 78.4 & 91.5 & 85.2 \\

\cline{2-9}
\noalign{\vskip 0.1mm}

& \cellcolor{gray!10}DoRA* 
& \cellcolor{gray!10}95.0\dn{0.1}
& \cellcolor{gray!10}89.7\up{0.0}
& \cellcolor{gray!10}64.9\up{1.5}
& \cellcolor{gray!10}92.9\dn{0.4}
& \cellcolor{gray!10}79.2\up{0.8}
& \cellcolor{gray!10}91.3\dn{0.2}
& \cellcolor{gray!10}85.5\up{0.3} \\

& MoLEx* 
& 95.4\up{0.3}
& 89.8\up{0.1}
& 64.8\up{1.4}
& 93.2\dn{0.1}
& 77.3\dn{1.1}
& 91.0\dn{0.5}
& 85.3\up{0.1} \\

& \cellcolor{gray!10}Control* 
& \cellcolor{gray!10}95.3\up{0.2}
& \cellcolor{gray!10}89.3\dn{0.4}
& \cellcolor{gray!10}65.3\up{1.9}
& \cellcolor{gray!10}93.1\dn{0.2}
& \cellcolor{gray!10}76.9\dn{1.5}
& \cellcolor{gray!10}90.6\dn{0.9}
& \cellcolor{gray!10}85.1\dn{0.1} \\

& \cellcolor{moccolor}\textbf{MoC} 
& \cellcolor{moccolor}\textbf{95.6}\up{0.5}
& \cellcolor{moccolor}\textbf{90.7}\up{1.0}
& \cellcolor{moccolor}\textbf{66.7}\up{3.3}
& \cellcolor{moccolor}\textbf{93.4}\up{0.1}
& \cellcolor{moccolor}\textbf{81.9}\up{3.5}
& \cellcolor{moccolor}\textbf{91.5}\up{0.0}
& \cellcolor{moccolor}\textbf{86.7}\up{1.5} \\

\hline

\multirow{5}{*}{RoBERTa-Large}

& LoRA* 
& 96.2 & 90.2 & 68.2 & 94.8 & 85.2 & 92.3 & 87.8 \\

\cline{2-9}
\noalign{\vskip 0.1mm}

& \cellcolor{gray!10}DoRA* 
& \cellcolor{gray!10}96.6\up{0.4}
& \cellcolor{gray!10}90.9\up{0.7}
& \cellcolor{gray!10}68.8\up{0.6}
& \cellcolor{gray!10}95.2\up{0.4}
& \cellcolor{gray!10}84.7\dn{0.5}
& \cellcolor{gray!10}92.5\up{0.2}
& \cellcolor{gray!10}88.1\up{0.3} \\

& MoLEx* 
& 96.4\up{0.2}
& 91.4\up{1.2}
& 68.2\up{0.0}
& 94.8\up{0.0}
& 87.1\up{1.9}
& 92.0\dn{0.3}
& 88.3\up{0.5} \\

& \cellcolor{gray!10}Control* 
& \cellcolor{gray!10}96.7\up{0.5}
& \cellcolor{gray!10}90.9\up{0.7}
& \cellcolor{gray!10}69.9\up{1.7}
& \cellcolor{gray!10}95.2\up{0.4}
& \cellcolor{gray!10}86.3\up{1.1}
& \cellcolor{gray!10}92.5\up{0.2}
& \cellcolor{gray!10}88.6\up{0.8} \\

& \cellcolor{moccolor}\textbf{MoC} 
& \cellcolor{moccolor}\textbf{97.2}\up{1.0}
& \cellcolor{moccolor}\textbf{92.6}\up{2.4}
& \cellcolor{moccolor}\textbf{71.4}\up{3.2}
& \cellcolor{moccolor}\textbf{95.2}\up{0.4}
& \cellcolor{moccolor}\textbf{89.9}\up{4.7}
& \cellcolor{moccolor}\textbf{92.5}\up{0.2}
& \cellcolor{moccolor}\textbf{89.8}\up{2.0} \\

\hline

\multirow{5}{*}{DeBERTa-Base}

& LoRA* 
& 96.2 & 89.4 & 70.4 & 94.3 & 83.4 & 91.0 & 87.5 \\

\cline{2-9}
\noalign{\vskip 0.1mm}

& \cellcolor{gray!10}DoRA* 
& \cellcolor{gray!10}96.0\dn{0.2} 
& \cellcolor{gray!10}90.2\up{0.8} 
& \cellcolor{gray!10}71.2\up{0.8} 
& \cellcolor{gray!10}94.3\up{0.0} 
& \cellcolor{gray!10}84.0\up{0.6} 
& \cellcolor{gray!10}91.0\up{0.0} 
& \cellcolor{gray!10}87.8\up{0.3} \\

& MoLEx 
& 96.0\dn{0.2} 
& 90.1\up{0.7} 
& 71.0\up{0.6} 
& 94.3\up{0.0} 
& 85.8\up{2.4} 
& 91.4\up{0.4} 
& 88.1\up{0.6} \\

& \cellcolor{gray!10}Control 
& \cellcolor{gray!10}96.3\up{0.1} 
& \cellcolor{gray!10}89.8\up{0.4} 
& \cellcolor{gray!10}71.3\up{0.9} 
& \cellcolor{gray!10}94.4\up{0.1} 
& \cellcolor{gray!10}84.8\up{1.4} 
& \cellcolor{gray!10}91.5\up{0.5} 
& \cellcolor{gray!10}88.0\up{0.5} \\

& \cellcolor{moccolor}\textbf{MoC} 
& \cellcolor{moccolor}\textbf{96.9}\up{0.7} 
& \cellcolor{moccolor}\textbf{91.0}\up{1.6} 
& \cellcolor{moccolor}\textbf{71.8}\up{1.4} 
& \cellcolor{moccolor}\textbf{94.5}\up{0.2} 
& \cellcolor{moccolor}\textbf{87.8}\up{4.4} 
& \cellcolor{moccolor}\textbf{91.9}\up{0.9} 
& \cellcolor{moccolor}\textbf{89.0}\up{1.5} \\

\hline

\multirow{5}{*}{DeBERTa-Large}

& LoRA* 
& 96.9 & 91.8 & 75.0 & 95.9 & 91.0 & 92.7 & 90.5 \\

\cline{2-9}
\noalign{\vskip 0.1mm}

& \cellcolor{gray!10}DoRA* 
& \cellcolor{gray!10}96.6\dn{0.3} 
& \cellcolor{gray!10}92.1\up{0.3} 
& \cellcolor{gray!10}76.9\up{1.9} 
& \cellcolor{gray!10}95.8\dn{0.1} 
& \cellcolor{gray!10}91.1\up{0.1} 
& \cellcolor{gray!10}92.6\dn{0.1} 
& \cellcolor{gray!10}90.9\up{0.4} \\

& MoLEx 
& 96.6\dn{0.3} 
& 91.9\up{0.1} 
& 75.9\up{0.9} 
& 95.9\up{0.0} 
& 91.6\up{0.6} 
& 92.7\up{0.0} 
& 90.8\up{0.3} \\

& \cellcolor{gray!10}Control 
& \cellcolor{gray!10}\textbf{96.7}\dn{0.2} 
& \cellcolor{gray!10}\textbf{92.7}\up{0.9} 
& \cellcolor{gray!10}76.4\up{1.4} 
& \cellcolor{gray!10}95.9\up{0.0} 
& \cellcolor{gray!10}90.0\dn{1.0} 
& \cellcolor{gray!10}92.3\dn{0.4} 
& \cellcolor{gray!10}90.6\up{0.1} \\

& \cellcolor{moccolor}\textbf{MoC} 
& \cellcolor{moccolor}\textbf{97.3}\up{0.4} 
& \cellcolor{moccolor}\textbf{92.7}\up{0.9} 
& \cellcolor{moccolor}\textbf{77.0}\up{2.0} 
& \cellcolor{moccolor}\textbf{96.2}\up{0.3} 
& \cellcolor{moccolor}\textbf{93.9}\up{2.9} 
& \cellcolor{moccolor}\textbf{92.9}\up{0.2} 
& \cellcolor{moccolor}\textbf{91.7}\up{1.2} \\

\Xhline{1.2pt}
\end{tabular}
}
\vspace{-0.2cm}
\end{table*}

\subsection{Ablation Studies}
\label{main:ablation}

\paragraph{TopK Selection.}
We analyze Top-$K$ selection across LLM backbones and report training time and GPU memory on Mistralv3-7B to evaluate scalability. As shown in \cref{fig:ablation_topk_alpha_lambda}(b), MoC performs best with two activated experts, followed by eight. In MoC, Top-$K=1$ pairs the block-local controller with one routed controller from other blocks, similar to Top-2 routing in sparse MoE and offering a strong accuracy–efficiency trade-off \cite{lepikhin2020_gshard, teo2024_momentumsmoe, zhou2023brainformers}. Unlike standard MoE, MoC selects experts across blocks rather than within a layer, so increasing Top-$K$ changes both control capacity and inter-block information exchange. We also test $K\in\{0,1,3,7\}$ (correspond to the number of activated experts $\in\{1,2,4,8\}$), where $K=0$ uses only the local controller (\textit{Control baseline}); see \cref{fig:ablation_topk_alpha_lambda}(a). Training time and memory increase moderately with Top-$K$ and remain well below MoLEx in \cref{tab:main_llm}, indicating efficient scaling. Routing visualizations further show adaptive use of local and global information, supporting cross-block control.

\paragraph{Mixing Coefficient $\alpha$.}
We study the sensitivity of MoC to the mixing coefficient $\alpha$. Motivated by \cref{prop4.1:Expressive_moc}, we vary $\alpha \in \{0.5, 0.6, 0.7, 0.8, 0.9, 0.95, 0.99\}$ while fixing $\lambda=0.01$. As shown in \cref{fig:ablation_topk_alpha_lambda} (c), MoC achieves the best performance at $\alpha=0.95$ across different LLMs. It also remains competitive or superior to the baselines in \cref{tab:main_llm} over a wide range of $\alpha$ values, indicating robustness to $\alpha$ while benefiting from the expressive inter-block communication characterized in \cref{prop4.1:Expressive_moc}.

\paragraph{Load-Balancing Coefficient $\lambda$.}
We study the sensitivity of MoC to the load-balancing coefficient $\lambda$ by fixing $\alpha=0.95$ and varying $\lambda \in {0, 0.005, 0.01, \ldots, 0.03}$, where $\lambda=0$ disables load balancing. As shown in \cref{fig:ablation_topk_alpha_lambda} (d), load balancing improves performance, suggesting better expert utilization and routing, consistent with \cref{thm:moc_stability}, \cref{cor:gradient}, and \cref{lem:representation_error}. Performance peaks at $\lambda=0.01$, while larger values begin to hinder the main objective.

\paragraph{MoC Variants.}
We compare MoC variants that control attention, FFN, or both (\cref{tab:moc_variants}); FFN control consistently beats attention-only, and MoC (FFN+ATTN) with a shared gate performs best, matching the view that attention routes information while FFNs drive nonlinear semantic transformation \cite{geva2021transformer, lu2019understanding}. Shared gating over both modules enables richer cross-layer control and finer expert selection, strengthening cross-layer communication in line with fine-grained SMoE principles (\cref{sec4.2:moc}).
\begin{table}[th]
\vspace{-0.2cm}
\centering
\caption{Comparison of different MoC variants on LLaMA3-8B and Mistralv3-7B.}
\vspace{0.5em}
\label{tab:moc_variants}
\resizebox{\linewidth}{!}{
\begin{tabular}{l|cc}
\Xhline{1.2pt}
\rowcolor{cyan!20}
\textbf{MoC Variant} 
& \textbf{LLaMA3-8B} ($\uparrow$)
& \textbf{Mistralv3-7B} ($\uparrow$) \\
\Xhline{1.2pt}

ATTN 
& 85.9 & 84.9 \\

\rowcolor{gray!10}
FFN 
& 86.2 & 85.1 \\

ATTN+FFN w/o sharing 
& 86.4 & 84.9 \\

\rowcolor{moccolor}
\textbf{ATTN+FFN} 
& \textbf{86.7} 
& \textbf{85.5} \\

\Xhline{1.2pt}
\end{tabular}
}

\vspace{-0.65cm}
\end{table}

\section{Conclusion}

In this paper, we identify two limitations of efficient fine-tuning: restricted multi-block coordination in state-based methods and the high overhead of existing cross-block communication approaches. We propose Mixture-of-Control (MoC), a lightweight framework that treats block-wise control states as experts and adaptively combines local and global signals through sparse routing. Across diverse backbones, MoC consistently outperforms prior baselines with minimal training overhead and competitive inference throughput. These results highlight control-driven cross-block communication as a promising direction for scalable Transformer adaptation, while further improving inference efficiency and exploring more expressive control architecture stands for an important direction for future work.

\section*{Impact Statement}
This paper presents work whose goal is to advance the field of Machine
Learning. There are many potential societal consequences of our work, none
which we feel must be specifically highlighted here.
\bibliography{references}
\bibliographystyle{icml2026}

\newpage
\appendix
\onecolumn
\clearpage
\icmltitle{Suplementary Material to ``Mixture-of-Control: State-Aware Fine-Tuning for Transformer-based Models''}

\section{Proofs for Theoretical Results}
\label{Appendix:Proof}
\subsection{\cref{prop4.1:Expressive_moc}}
\begin{proof}
For simplicity, in the linear case, we take $f_t$ to be an identity-like mapping. Thus, MoLEx update is  simplified into:
\begin{align}
z_{t+1} &= z_t + [\alpha (W_t + \Delta W_t') + (1 - \alpha) (W_{h_t} + \Delta W_{h_t}')]z_t \\
&= z_t + W_t z_t + [\alpha (\Delta W_t') + (1 - \alpha) (W_{h_t} - W_t + \Delta W_{h_t}')]z_t 
\end{align}

Denote $M_t = \alpha (\Delta W_t') + (1 - \alpha) (W_{h_t} - W_t + \Delta W_{h_t}')$, for $t = \overline{1, L}$. Similarly, the generalized state-based update is formulated as follows:
\begin{align*}
    x_{t+1} &= x_t + W_t x_t + \alpha \Delta W_t x_t + (1 - \alpha) \Delta W_{h_t} x_t \\ 
    &= x_t + W_t x_t +(\alpha \Delta W_t + (1 - \alpha) \Delta W_{h_t})x_t
\end{align*}

Consequently, under the initial condition $x_1=z_1$, the equivalence between the two systems can be demonstrated by the following set of equations:
$$
\begin{cases}
    \alpha \Delta W_1 + (1-\alpha)\Delta W_{h_1} = M_1\\
    \alpha \Delta W_2 + (1-\alpha)\Delta W_{h_2} = M_2\\ 
    ... \\
    \alpha \Delta W_L + (1-\alpha)\Delta W_{h_L} = M_L. 
\end{cases}
$$
This admits the following matrix representation: 
$$
\begin{bmatrix}
    \alpha & 0 & 0 & \ldots & 1-\alpha & 0 & \ldots & 0 \\ 
    0 & \alpha & 0 & \ldots & 0 & 1 - \alpha  & \ldots & 0 \\ 
    0 & 0 & \alpha & \ldots & 0 & 0 & \ldots & \ldots \\ 
    \ldots & \ldots & \ldots & \ldots & \ldots & \ldots & \ldots & \ldots \\ 
    \ldots & \ldots & \ldots & \ldots & \ldots & \ldots & \ldots & \ldots \\ 
    \ldots & \ldots & \ldots & \ldots & \ldots & \ldots & \ldots & \ldots \\ 
    \vdots & \vdots & \vdots & \vdots & \vdots & \vdots & \vdots & \vdots \\ 
    \ldots & \ldots & \ldots &  1 - \alpha & 0 & 0 & \ldots & \alpha
\end{bmatrix} \begin{bmatrix}
    \Delta W_1 \\
    \Delta W_2 \\ 
    \Delta W_3 \\
    \Delta W_4 \\
    \Delta W_5 \\
    \Delta W_6 \\
    \vdots \\
    \Delta W_L
\end{bmatrix} = \begin{bmatrix}
     M_1 \\
     M_2 \\ 
     M_3 \\
     M_4 \\
     M_5 \\
     M_6 \\
    \vdots \\
     M_L
\end{bmatrix}.
$$
Denote $$A = 
\begin{bmatrix}
    \alpha & 0 & 0 & \ldots & 1-\alpha & 0 & \ldots & 0 \\ 
    0 & \alpha & 0 & \ldots & 0 & 1 - \alpha  & \ldots & 0 \\ 
    0 & 0 & \alpha & \ldots & 0 & 0 & \ldots & \ldots \\ 
    \ldots & \ldots & \ldots & \ldots & \ldots & \ldots & \ldots & \ldots \\ 
    \ldots & \ldots & \ldots & \ldots & \ldots & \ldots & \ldots & \ldots \\ 
    \ldots & \ldots & \ldots & \ldots & \ldots & \ldots & \ldots & \ldots \\ 
    \vdots & \vdots & \vdots & \vdots & \vdots & \vdots & \vdots & \vdots \\ 
    \ldots & \ldots & \ldots &  1 - \alpha & 0 & 0 & \ldots & \alpha
\end{bmatrix},$$
where $A = \alpha I + (1 - \alpha) B$, where $I \in \mathbb{R}^{Ld \times Ld}$ is the identity matrix, and $B \in \mathbb{R}^{Ld \times Ld}$ is a matrix with exactly $Ld$ entries equal to $1$ (with positions determined by $W_{h_t}$), and all other entries equal to $0$. 

Next, we prove that $\alpha > \tfrac{1}{2}$ ensures $\det(A) \neq 0$. Suppose, for contradiction, that there exists a nonzero vector
$$
\mathbf{x} = \begin{bmatrix} x_1 \cr x_2 \cr \ldots \cr x_{{Ld}} \end{bmatrix} \neq \mathbf{0},
$$
such that
$$
A \mathbf{x} = 0.
$$
Then,
$$
(\alpha I + (1 - \alpha) B)\mathbf{x} = 0
\quad \Longleftrightarrow \quad
B \mathbf{x} = -\frac{\alpha}{1 - \alpha} \mathbf{x}.
$$
Writing this component-wise, we obtain
$$\begin{bmatrix}x_{i_1}\cr x_{i_2}\cr \vdots \cr x_{i_{Ld}}\end{bmatrix} = \dfrac{\alpha}{1 - \alpha} \begin{bmatrix}-x_1 \cr -x_2 \cr \vdots \cr -x_{Ld}\end{bmatrix},$$
where $\{i_1, i_2, \ldots, i_{Ld}\} \subseteq \{1, 2, \ldots, Ld\}$.

Taking the maximum absolute value on both sides yields
$$\max_{1 \le j \le Ld} |x_{i_j}| = \frac{\alpha}{1 - \alpha} \max_{1 \le j \le Ld} |x_j|.$$
Since $\alpha > \tfrac{1}{2}$ implies $\frac{\alpha}{1 - \alpha} > 1$, we obtain
$$\max_{1 \le j \le Ld} |x_{i_j}| > \max_{1 \le j \le Ld} |x_j|,$$
which is a contradiction. Therefore, no such nonzero $\mathbf{x}$ exists, and hence $\det(A) \neq 0$. Consequently, there exists a \emph{unique} solution \(\{\Delta W_k\}_{k=1}^L\) that establishes representational equivalence between the generalized state-based formulation and MoLEx.

In the nonlinear case, we show that the equivalence holds approximately when the adapters \(\{\Delta W_k'\}_{k=1}^L\) are low-norm perturbations, using local linearization around the frozen pretrained path. Assume that each activation function \(f_k\) (\(k=1,\dots,L\)) is twice continuously differentiable, with
\begin{itemize}
    \item Lipschitz constant \(L_f < \infty\) (uniform over \(k\)): \(\|f_k(a) - f_k(b)\| \leq L_f \|a - b\|\) for all \(a,b\),
    \item Bounded Hessian norm \(H < \infty\) (uniform over \(k\)): \(\|\nabla^2 f_k(z)\| \leq H\) for all \(z\) (operator norm).
    \item Assume the MoLEx adapters are small: $\|\Delta W_k'\| \leq \xi$ with $\xi > 0$, $\forall \, k = \overline{1, L}$. Assume the matrix $A$ above is invertible. Assume bounded intermediate states $\mathbb{E}[\|x_t\|^2] \leq B < \infty$ and \emph{pretrained mixing closeness}:
    \[
    \mathbb{E}\left[ \big\|f_{h_t}(W_{h_t} x_t) - f_t(W_t x_t)\big\|^2 \right] \leq \gamma^2 \mathbb{E}[\|x_t\|^2] + \beta^2
    \]
    for small $\gamma, \beta \geq 0$ independent of $\xi$.
\end{itemize}
These conditions are satisfied by common activation functions (ReLU, GeLU, SwiGLU, etc.) on bounded domains or under mild regularization. Furthermore, the final assumption is well‑motivated: recent work \cite{he2024matters} shows that the outputs of Transformer blocks exhibit high similarity, which has driven further investigation into model pruning. Motivated by this observation, we impose a bound on the block outputs, referred to as the $``$\textit{pretrained mixing closeness}$"$ assumption.

We construct the state-based adapters \(\{\Delta W_k\}_{k=1}^L\) exactly as in the linear case: solve the invertible linear system derived above (possible since \(\alpha > \tfrac{1}{2}\)) to match the \emph{first-order} effective updates assuming identity Jacobians.

Consider a single update step \(t\). By the Taylor theorem with integral remainder, for any layer \(k\),
\begin{align}
    f_k((W_k + \Delta W_k') z_t) &= f_k(W_k z_t) + \nabla f_k(W_k z_t) \, (\Delta W_k' z_t) \notag \\
    &\quad + \int_0^1 (1-s) \nabla^2 f_k(W_k z_t + s \Delta W_k' z_t) \, ds \, ( \Delta W_k' z_t \otimes \Delta W_k' z_t) \notag \\
    &= f_k(W_k z_t) + \nabla f_k(W_k z_t) \, (\Delta W_k' z_t) + R_k(z_t), 
\end{align}
where \(\otimes\) denotes the appropriate bilinear action of the Hessian.

The remainder \(R_k(z_t)\) satisfies
\begin{align*}
    \|R_k(z_t)\| \leq \frac{H}{2} \|\Delta W_k' z_t\|^2 \leq \frac{H}{2} \xi^2 \|z_t\|^2.
\end{align*}

Substituting into the MoLEx update, we obtain
\begin{align}
    z_{t+1} &= z_t + \alpha \Big[ f_t(W_t z_t) + \nabla f_t(W_t z_t) (\Delta W_t' z_t) + R_t(z_t) \Big] \notag \\
    &\quad + (1-\alpha) \Big[ f_{h_t}(W_{h_t} z_t) + \nabla f_{h_t}(W_{h_t} z_t) (\Delta W_{h_t}' z_t) + R_{h_t}(z_t) \Big].
\end{align}

Recall that the generalize state-based update is presented by
\begin{align*}
    x_{t+1} &= x_t + f_t(W_t x_t) + \alpha \, \Delta W_t x_t + (1-\alpha) \, \Delta W_{h_t} x_t.
\end{align*}

Define the \emph{linearized MoLEx control} at step \(t\) as
\begin{align*}
    \tilde{M}_t := \alpha \, \nabla f_t(W_t z_t) \, \Delta W_t' + (1-\alpha) \, \nabla f_{h_t}(W_{h_t} z_t) \, \Delta W_{h_t}'.
\end{align*}

The MoLEx update expands to
\begin{align}
    z_{t+1} &= z_t + \alpha f_t(W_t z_t) + (1-\alpha) f_{h_t}(W_{h_t} z_t)
    + \tilde{M}_t z_t + \alpha R_t(z_t) + (1-\alpha) R_{h_t}(z_t) \notag \\ 
    &= z_t + f_t(W_t z_t) + (1-\alpha)[f_{h_t}(W_{h_t}z_t) - f_t(W_tz_t)] +  \tilde{M}_t z_t + \alpha R_t(z_t) + (1-\alpha) R_{h_t}(z_t).
\end{align}

Assume that in step $t$, we have $z_t = x_t$, then the difference $z_{t+1} - x_{t+1}$ consists of three terms:
\begin{enumerate}
    \item Base mixing error: $(1-\alpha)[f_{h_t}(W_{h_t} x_t) - f_t(W_t x_t)]$, with $L^2$ norm $\leq |1-\alpha|^2 (\gamma^2 B + \beta^2)$.
    \item Control mismatch: $\tilde{M}_t x_t - (\alpha \Delta W_t + (1-\alpha) \Delta W_{h_t}) x_t$. By linear-case construction and $\|\nabla f_k\| \leq L_f$, this is $O(\xi \sqrt{B})$.
    \item Remainder: $\|\alpha R_t + (1-\alpha) R_{h_t}\| = O(\xi^2 B)$.
\end{enumerate}

Applying the Cauchy-Schwarz inequality $\|\sum_{i=1}^3 a_i\|^2 \leq 3 \sum \|a_i\|^2$ over the three terms, taking expectations, and absorbing higher-order terms yields the per-step bound. 
Then, for each $t$, we have
    \[
    \mathbb{E}\left[ \|z_{t+1} - x_{t+1}\|^2 \right] \leq C \Big( |1-\alpha|^2 (\gamma^2 + \beta^2/B) + \xi^2 \Big),
    \]
    with constant $C > 0$ depending on $\alpha, \xi, L_f, H, B$. As $\xi \to 0$, and $\gamma,\beta \to 0$ (or $|1-\alpha| \to 0$), the trajectories converge uniformly in $L^2$:
    \[
    \max_{t} \mathbb{E}\left[ \|z_t - x_t\|^2 \right] \to 0.
    \]
Accordingly, a generalized state‑based cross‑layer communication mechanism suffices to approximate the MoLEx representation while achieving a markedly reduced computational cost. This completes the proof.
\end{proof}

\subsection{\cref{thm:moc_stability}}










\begin{proof}
We analyze the forward dynamics by bounding the growth of hidden state norms across layers. The key idea is to show that MoC's adaptive routing mechanism can select control modules with smaller operator norms, leading to improved stability.
For Parallel Control with $\alpha = 1$ (no routing), the update is:
\begin{equation}
    x_{t+1}^{c} = x_t^c + f_t(W_tx_t^c) + \Delta W_t x_t^c.
\end{equation}

Taking norms and applying the triangle inequality:
\begin{align}
    \|x_{t+1}^{c}\| &\leq \|x_t^c\| + \|f_t(W_tx_t^c)\| + \|\Delta W_t x_t^c\| \notag \\
    &\leq \|x_t^c\| + L_f\|x_t^c\| + \|\Delta W_t\|\|x_t^c\|.
\end{align}

where we used the assumption that $\|f_t(W_tx_t^c)\| \leq L_f\|x_t^c\|$ (bounded nonlinearity with Lipschitz constant $L_f$).

Using the low-rank factorization $\Delta W_t = B_t A_t$ with $\|B_t\| \leq \beta_t$ and $\|A_t\| \leq \gamma$:
\begin{equation}
    \|x_{t+1}^{c}\| \leq (1 + L_f + \beta_t\gamma)\|x_t^c\|.
\end{equation}

By taking the expectation over data distribution, we have:
\begin{align}
    \mathbb{E}[\|x^c_{t+1}\|/\|x^c_t\|] &\leq (1 + L_f + \beta_t\gamma)
\end{align}

Similarly, for the MoC dynamic with mixing parameter $\alpha \in [0,1]$ and TopK routing, we imply:
\begin{equation}
    x_{t+1} = x_t + f_t(W_tx_t) + \alpha \Delta W_t x_t + (1-\alpha)\sum_{k \in S_K(x_t)} \pi_k(x_t) \Delta W_k x_t
\end{equation}

where $S_K(x_t) \subseteq \{1,\ldots,L\}$ are the top-$K$ selected experts and $\sum_{k \in S_K} \pi_k(x_t) = 1$. Similarly to Parallel Control, we take norms for MoC:
\begin{align}
    \|x_{t+1}\| &\leq \|x_t\| + \|f_t(W_tx_t)\| + \alpha\|\Delta W_t x_t\| + (1-\alpha)\left\|\sum_{k \in S_K} \pi_k(x_t) \Delta W_k x_t\right\| \notag \\
    &\leq \|x_t\| + L_f\|x_t\| + \alpha\beta_t\gamma\|x_t\| + (1-\alpha)\sum_{k \in S_K} \pi_k(x_t)\|\Delta W_k x_t\|
\end{align}

where the last step uses convexity of the norm: $\|\sum_k \pi_k v_k\| \leq \sum_k \pi_k \|v_k\|$ for $\pi_k \geq 0, \sum_k \pi_k = 1$. Using $\|\Delta W_k\| \leq \beta_k\gamma$, we imply:
\begin{align}
    \|x_{t+1}\| &\leq \|x_t\| + L_f\|x_t\| + \alpha\beta_t\gamma\|x_t\| + (1-\alpha)\sum_{k \in S_K} \pi_k(x_t)\beta_k\gamma\|x_t\| \notag \\
    &= \left(1 + L_f + \alpha\beta_t\gamma + (1-\alpha)\beta_K(x_t)\gamma\right)\|x_t\|
\end{align}

where we define the effective routing norm:
\begin{equation}
    \beta_K(x_t) := \sum_{k \in S_K(x_t)} \pi_k(x_t)\beta_k.
\end{equation}

Taking expectations over the data distribution $\mathcal{D}$:
\begin{align}
    \mathbb{E}[\|x_{t+1}\|/\|x_t\|] &\leq \left(1 + L_f + \alpha\beta_t\gamma + (1-\alpha)\mathbb{E}[\beta_K(x_t) | x_t]\gamma\right)
\end{align}

By routing learns to select experts with smaller operator norms on average:
\begin{equation}
    \mathbb{E}[\beta_K(x_t) | x_t] < \beta_t, 
\end{equation}
then the norm bound forward pass will be smaller than Parallel Control:
\begin{equation*}
  (1 + L_f + \alpha\beta_t\gamma + (1-\alpha)\mathbb{E}[\beta_K(x_t)|x_t]\gamma)  < (1 + L_f + \alpha\beta_t\gamma + (1-\alpha)\beta_t\gamma) = (1 + L_f + \beta_t\gamma), \\
\end{equation*}
which imply that MoC achieves improved forward stability compared to conventional control when routing is effective.
\end{proof}

Noting that with the assumption $\mathbb{E}[\beta_K(x_t)|x_t] < \beta_t$, we employ the load‑balancing loss \cite{fedus2022_switch, lepikhin2020_gshard} to encourage balanced token routing across experts. This promotes more effective utilization of high‑capacity experts, thereby increasing the likelihood of selecting suitable experts that meet the assumption.

\subsection{\cref{cor:gradient}}

\begin{proof}
We analyze how control affects gradient flow during backpropagation through the network.

The backward gradient through block $t$ is computed via the chain rule:
\begin{equation}
    \frac{\partial \mathcal{L}}{\partial x_t} = J_t^\top \frac{\partial \mathcal{L}}{\partial x_{t+1}},
\end{equation}
then for MoC, differentiating the update equation gives:
\begin{align}
    J_t &= \frac{\partial}{\partial x_t}\left[x_t + f_t(W_t x_t) + \alpha\Delta W_t x_t + (1-\alpha)\sum_{k \in S_K} \pi_k(x_t) \Delta W_k x_t\right] \notag \\
    &= I + \frac{\partial f_t(W_t x_t)}{\partial x_t} + \alpha\Delta W_t + (1-\alpha)\sum_{k \in S_K} \pi_k(x_t)\Delta W_k \notag \\
    &\quad + (1-\alpha)\sum_{k \in S_K} \frac{\partial \pi_k(x_t)}{\partial x_t} \otimes (\Delta W_k x_t)
\end{align}

For simplicity, we focus on the dominant terms and assume routing weights are approximately constant locally (i.e., the gradient w.r.t. $\pi_k$ is small), giving:
\begin{equation}
    J_t \approx I + \frac{\partial f_t(W_t x_t)}{\partial x_t} + \alpha\Delta W_t + (1-\alpha)\sum_{k \in S_K} \pi_k(x_t)\Delta W_k
\end{equation}

Then we take norms and using the Lipschitz property of $f_t$ with constant $L_f$:
\begin{align}
    \|J_t\| &\leq 1 + \left\|\frac{\partial f_t(W_t x_t)}{\partial x_t}\right\| + \alpha\|\Delta W_t\| + (1-\alpha)\left\|\sum_{k \in S_K} \pi_k(x_t)\Delta W_k\right\| \notag \\
    &\leq 1 + L_f\|W_t\| + \alpha\|\Delta W_t\| + (1-\alpha)\sum_{k \in S_K} \pi_k(x_t)\|\Delta W_k\| \notag\\
    &\leq 1 + L_f\|W_t\| + \alpha\beta_t\gamma + (1-\alpha)\beta_K(x_t)\gamma
\end{align}

Under the assumptions of \cref{thm:moc_stability}, we obtain:
\begin{equation*}
    1 + L_f\|W_t\| + \alpha\beta_t\gamma + (1-\alpha)\beta_K(x_t)\gamma < 1 + L_f\|W_t\| +\beta_t\gamma, 
\end{equation*}
which indicates that the Jacobian norm of MoC is more tightly bounded than that of Parallel Control, leading to a more stable gradient flow.
\end{proof}
In addition to the stability analyses of the representation and its gradient dynamics, we assess the fidelity of the learned representations. Assume $\Delta W^* x$ denote the optimal control signal that yields task-optimal representations. Given a MoC signal $g_\text{moc}(x)$, then the expected control-approximation error over data distribution is defined as follows:
\begin{equation}
    \bar{\epsilon}^2 \coloneqq \mathbb{E}\left[\|g_{\text{moc}}(x) - \Delta W^*x\|^2\right].
\end{equation}

\subsection{\cref{lem:representation_error}}

\begin{proof}
Given input $x$, then control error of Parallel Control is:
\begin{align}
    \delta_t^c(x) &= \Delta W_t x - \Delta W^*x,
\end{align}
then control update at block $t$ of MoC can be written as follows:
\begin{equation}
    g_{\text{moc}}(x) = \alpha \Delta W_t x + (1-\alpha) \sum_{k\in S_K(x)}\pi_k(x) \Delta W_k x.
\end{equation}
Next, the control representation error of MoC is:
\begin{align}
    \delta_t(x) &= g_{\text{moc}}(x) - \Delta W^*x \notag \\
    &= \alpha[\Delta W_t x - \Delta W^*x] + (1-\alpha)\sum_{k\in S_K(x)}\pi_k(x)[\Delta W_k x - \Delta W^*x] \\
    &= \alpha \delta_t^c(x) + (1-\alpha)\sum_{k\in S_K(x)}\pi_k(x)\delta_k^c(x),
\end{align}
where we have $\sum_{k\in S_K(x)}\pi_k(x) = 1$, $\pi_k(x) \ge 0, \forall \, k = \overline{1, L}$. Similarly, 

By convexity of $\|\cdot\|^2$ (Jensen's inequality):
\begin{align}
    \|\delta_t(x)\|^2 &= \left\|\alpha\delta^c_t(x) + (1-\alpha)\sum_{k \in S_K(x)}\pi_k(x)\delta^c_k(x)\right\|^2 \notag \\
    &\leq \alpha\|\delta_{t}^c(x)\|^2 + (1-\alpha)\left\|\sum_{k \in S_K(x)}\pi_k(x)\delta_k^c(x)\right\|^2 \\
\end{align}
Taking expectation over $x\sim\mathcal{D}$, $t =\overline{1, L}$:
\begin{align}
    \bar{\epsilon}^2 &= \mathbb{E}[\|\delta_t(x)\|^2] \notag \\
    &\leq \alpha \mathbb{E}[\|\delta^c_t(x)\|^2] + (1-\alpha)\mathbb{E}\left[\left\|\sum_{k \in S_K(x)}\pi_k(x)\delta_k^c(x)\right\|^2\right] \notag \\
    &= \alpha \bar{\epsilon}_{c}^2 + (1-\alpha)\bar{\epsilon}_K^2
\end{align}
This completes the proof. 
\end{proof}
\section{Experimental Details}
\label{appendix:exp_details}

This section provides comprehensive details about our experimental setup, including hyperparameter configurations, dataset specifications, and implementation details for all models evaluated in this work.



\subsection{Dataset Details}
\label{subsec:datasets}
To comprehensively evaluate the effectiveness of our proposed method, we conduct experiments across multiple benchmark datasets covering different NLP task categories.
\paragraph{Natural Language Understanding (NLU)}
For NLU tasks, we employ the GLUE benchmark \cite{wang2018glue}, a well-established benchmark suite that includes datasets such as SST-2, MRPC,
CoLA, QNLI, RTE, and STS-B. These tasks collectively evaluate model performance on grammaticality judgment, sentiment classification, semantic similarity, and textual entailment across domain distributions.

\paragraph{Natural Language Generation and Commonsense Reasoning (NLG)}
For evaluating NLG and commonsense reasoning capabilities, we utilize eight benchmark datasets: BoolQ~\citep{clark2019boolq}, PIQA~\citep{bisk2020piqa}, Social IQa~\citep{sap-etal-2019-social}, HellaSwag~\citep{zellers2019hellaswag}, WinoGrande~\citep{sakaguchi2021winogrande}, ARC-Easy/ Challenge~\citep{clark2018think}, and OpenBookQA~\citep{mihaylov2018can}. These datasets assess model understanding of complex reasoning, physical commonsense, and social dynamics.

\subsection{Model Architecture and Training Configuration}

\paragraph{Encoder-Only Models.} We utilized some Encoder-only models as the model backbones, particularly RoBERTa \cite{liu2019roberta} and DeBERTa \cite{he2020deberta}, on different model size (base and large), incorporating LoRA, DoRA, MoLEx and Parallel Control methods as the baselines. The detailed hyperparameter configurations for BERT-based models are illustrated in \cref{tab:bert_hyperparameters}. 


\paragraph{Decoder-Only Models.} We utilized some Large scale models as the backbones, including LLaMA2-7B \cite{touvron2023llama}, LLaMA3-8B \cite{grattafiori2024llama}, Mistral-7B Instruct \cite{jiang2023mistral7b}, Qwen2.5-14B Instruct \cite{qwen2025qwen25technicalreport}. These pretrained models are also finetuned across our methods as well as the baselines. The detailed hyperparameter configurations for the decoder-only models are illustrated in \cref{tab:llama_hyperparameters}. 
For routing, we consider two variants: (1) a linear router and (2) a cosine router with minimal extra parameters via a low-dimensional projection, which expands the latent space for richer representations.




\paragraph{MoC Configurations.}

For efficient Transformer training, we adopt batch-level routing in the MoC framework. Instead of routing each token independently, which would result in token-specific control assignments and hinder efficient batched computation, we make a unified routing decision for all tokens in a batch. To obtain the batch-level TopK controls, we aggregate token-level gate outputs using either $``$mode$"$ or $``$mean$"$ aggregation. The mode strategy selects the most frequently activated controls across tokens, while the mean strategy averages token-level probability vectors and chooses the controls with the highest averaged probabilities. In our experiments, we use either mode- or mean-based batch routing. The load-balancing weight is selected from $[0.001, 0.02]$, and the mixing coefficient is set to $\alpha \in \{0.6, 0.7, \ldots, 0.9, 0.95\}$, following \cref{prop4.1:Expressive_moc}.






\begin{table}[h]
\centering
\caption{Hyperparameter configurations for DeBERTa and RoBERTa models on GLUE tasks. Common settings include AdamW optimizer, linear learning rate schedule, warmup ratio of 0.06, and sequence length of 128 tokens.}
\scalebox{0.9}{
\begin{tabular}{ll|cccccc}
\hline
\textbf{Model} & \textbf{Dataset} & \textbf{SST-2} & \textbf{MRPC} & \textbf{CoLA} & \textbf{QNLI} & \textbf{RTE} & \textbf{STS-B} \\
\hline
& Optimizer & \multicolumn{6}{c}{AdamW} \\
& WarmUp Ratio & \multicolumn{6}{c}{0.06} \\
& LR Schedule & \multicolumn{6}{c}{Linear} \\
& Max Seq Length & \multicolumn{6}{c}{128} \\
\hline
\multirow{3}{*}{DeBERTa$_{\text{base}}$} 
& Batch Size    & 128 & 128 & 64 & 256 & 128 & 128 \\
& Epochs        & 50  & 30  & 80 & 25  & 80  & 40  \\
& Learning Rate & 5e-4 & 4e-4 & 3e-4 & 4e-4 & 4e-4 & 4e-4 \\
\hline
\multirow{3}{*}{DeBERTa$_{\text{large}}$} 
& Batch Size    & 64 & 32 & 32 & 32 & 64 & 32 \\
& Epochs        & 20 & 30 & 20 & 10 & 20 & 10 \\
& Learning Rate & 4e-4 & 3e-4 & 5e-4 & 3e-4 & 4e-4 & 3e-4 \\
\hline
\multirow{3}{*}{RoBERTa$_{\text{base}}$} 
& Batch Size    & 128 & 128 & 64 & 256 & 128 & 128 \\
& Epochs        & 50  & 30  & 80 & 25  & 80  & 40  \\
& Learning Rate & 5e-4 & 4e-4 & 3e-4 & 4e-4 & 4e-4 & 4e-4 \\
\hline
\multirow{3}{*}{RoBERTa$_{\text{large}}$} 
& Batch Size    & 64 & 32 & 32 & 32 & 64 & 32 \\
& Epochs        & 20 & 30 & 20 & 10 & 20 & 10 \\
& Learning Rate & 4e-4 & 3e-4 & 5e-4 & 3e-4 & 4e-4 & 3e-4 \\
\hline
\end{tabular}
}
\label{tab:bert_hyperparameters}
\end{table}


\begin{table}[t]
\centering
\caption{Hyperparameter settings and Mixture-of-Control configuration for LLaMA2-7B, LLaMA3-8B, Mistralv3-7B, and Qwen2.5-14B. Settings are chosen to balance performance and training efficiency.}
\resizebox{0.9\linewidth}{!}{
\begin{tabular}{lcccc}
\toprule
\textbf{Parameter} & \textbf{LLaMA2-7B} & \textbf{LLaMA3-8B} & \textbf{Mistralv3-7B} & \textbf{Qwen2.5-14B} \\
\midrule
Learning rate & $3 \times 10^{-4}$ & $3 \times 10^{-4}$ & $3 \times 10^{-4}$ & $3 \times 10^{-4}$ \\
Batch size & 16 & 16 & 16 & 12 \\
Gradient accumulation steps & 2 & 2 & 2 & 2 \\
Max sequence length & 256 & 256 & 256 & 256 \\
Projection dimension & 256 & 256 & 256 & 256 \\
Training epochs & 3 & 3 & 3 & 3 \\
Weight decay & 0.01 & 0.01 & 0.01 & 0.01 \\
\bottomrule
\end{tabular}
}
\label{tab:llama_hyperparameters}
\end{table}

\paragraph{Compute Resources} The compute resource used in our experiments consists of 8 NVIDIA H100 GPUs, each with 80GB of memory.

\subsection{Additional Experiments}

\begin{figure*}[t]
    \centering
    \begin{minipage}[t]{0.49\textwidth}
        \centering
        \includegraphics[width=\linewidth]{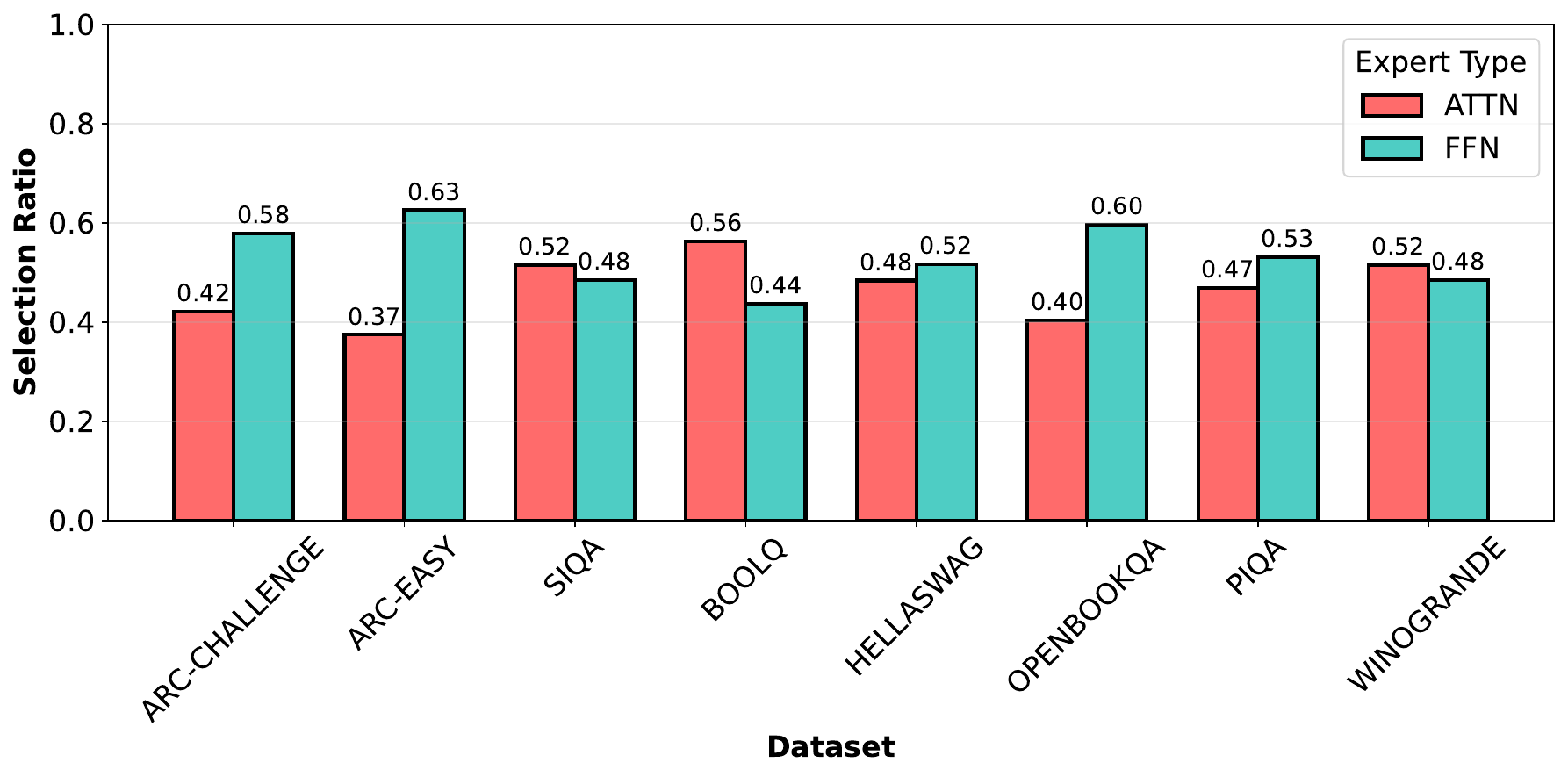}

        \vspace{0.3em}
        \small (a) Attention blocks
    \end{minipage}
    \hfill
    \begin{minipage}[t]{0.49\textwidth}
        \centering
        \includegraphics[width=\linewidth]{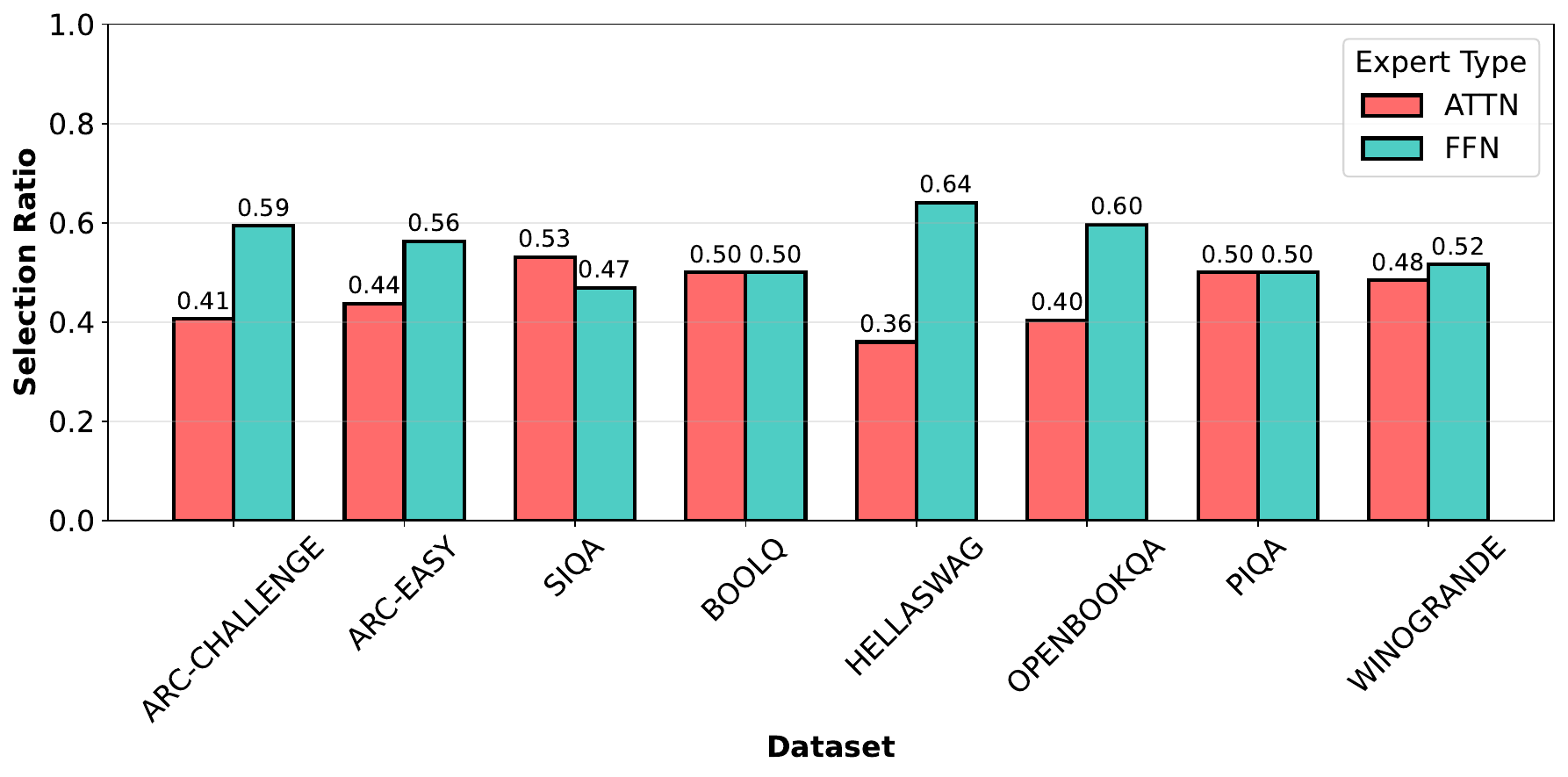}

        \vspace{0.3em}
        \small (b) FFN blocks
    \end{minipage}

    \caption{Cross-type routing behavior in LLaMA3-8B under the Top-$K=1$ setting, conditioned on the local control being from (a) Attention blocks or (b) FFN blocks.}
    \label{fig:crosstype_llama3_top1}
\end{figure*}

\begin{figure*}[t]
    \centering
    \begin{minipage}[t]{0.49\textwidth}
        \centering
        \includegraphics[width=\linewidth]{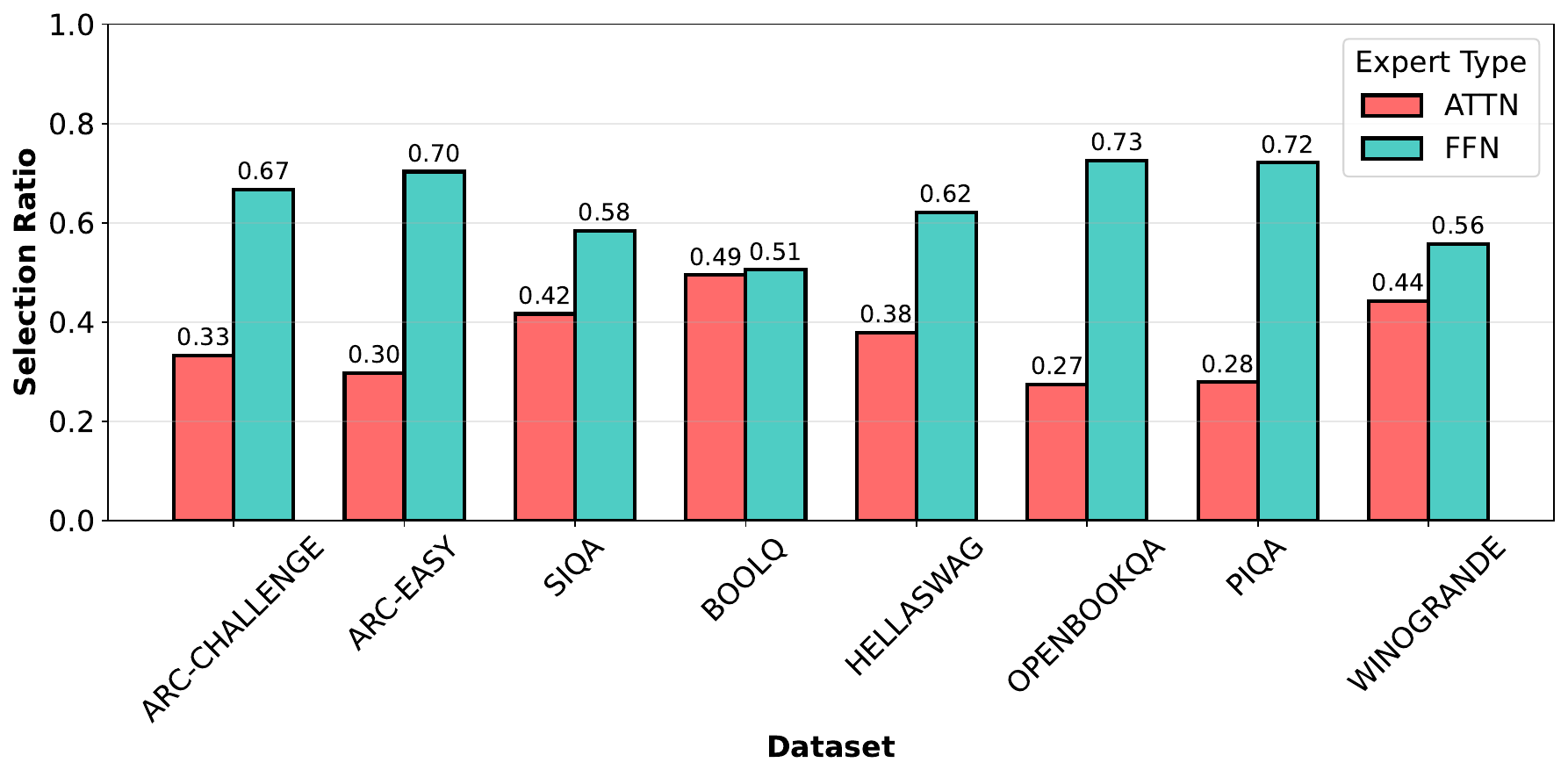}

        \vspace{0.3em}
        \small (a) Attention blocks
    \end{minipage}
    \hfill
    \begin{minipage}[t]{0.49\textwidth}
        \centering
        \includegraphics[width=\linewidth]{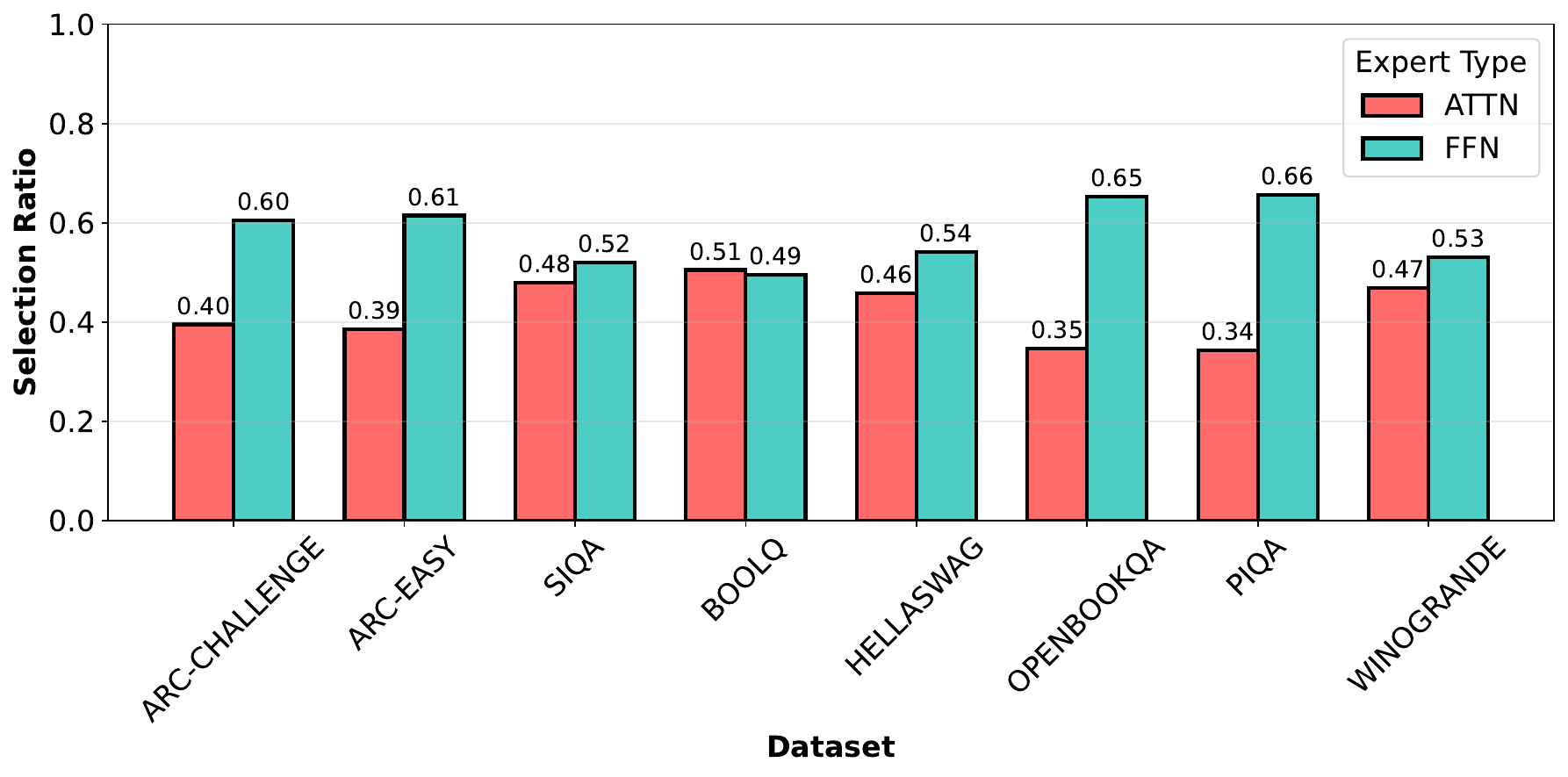}

        \vspace{0.3em}
        \small (b) FFN blocks
    \end{minipage}

    \caption{Cross-type routing behavior in LLaMA3-8B under the Top-$K=3$ setting, conditioned on the local control being from (a) Attention blocks or (b) FFN blocks.}
    \label{fig:crosstype_llama3_top3}
\end{figure*}

\begin{figure*}[t]
    \centering
    \begin{minipage}[t]{0.49\textwidth}
        \centering
        \includegraphics[width=\linewidth]{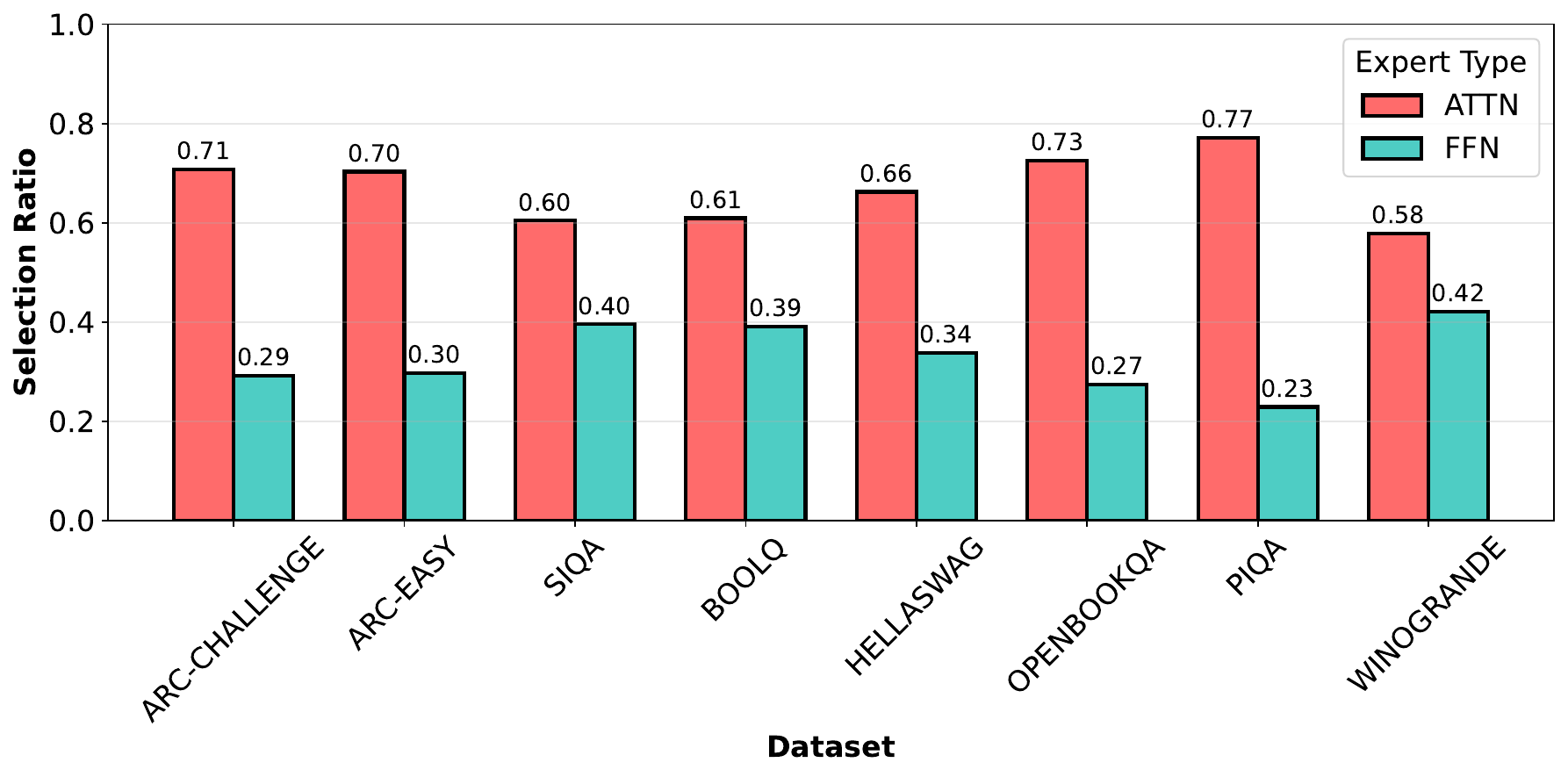}

        \vspace{0.3em}
        \small (a) Attention blocks
    \end{minipage}
    \hfill
    \begin{minipage}[t]{0.49\textwidth}
        \centering
        \includegraphics[width=\linewidth]{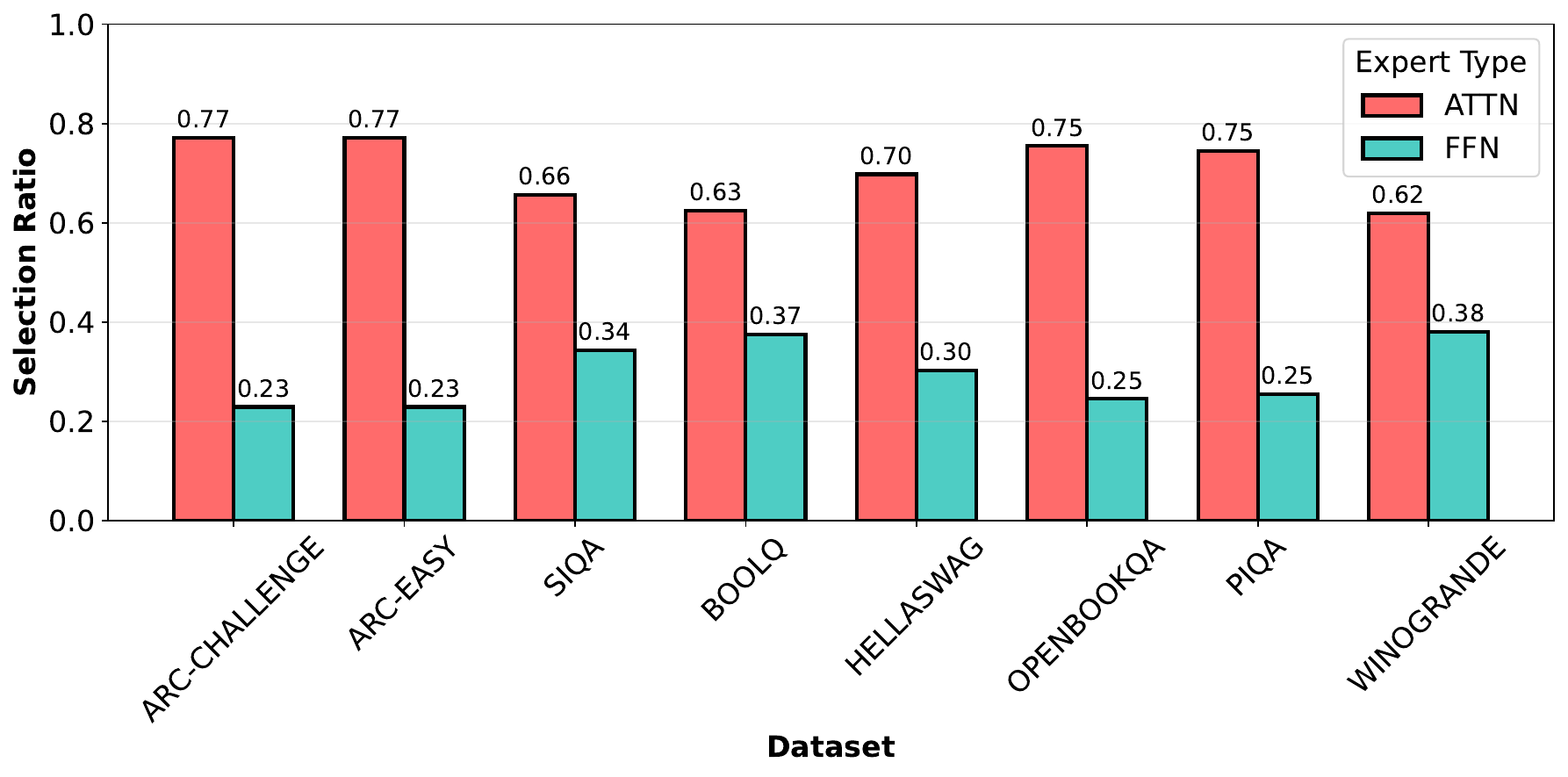}

        \vspace{0.3em}
        \small (b) FFN blocks
    \end{minipage}

    \caption{Cross-type routing behavior in LLaMA3-8B under the Top-$K=7$ setting, conditioned on the local control being from (a) Attention blocks or (b) FFN blocks.}
    \label{fig:crosstype_llama3_top7}
\end{figure*}

\paragraph{Inference Throughput.}
We measure the inference throughput of different LLMs across batch sizes, as shown in \cref{fig:inference_throughput}. Given 100 samples, we run batched inference and compute throughput as the number of generated tokens divided by the inference time for each batch. We then report the average throughput across all batches for each model.

\paragraph{Routing Behavior.} To better understand the routing behavior of MoC, we visualize the selected experts across layers for both Attention and FFN blocks under different routing budgets. The routing patterns show that, at a given timestep, MoC is not limited to the local control associated with the current layer. Instead, it often selects experts from non-local layers, suggesting that the model learns to dynamically aggregate cross-layer information for richer intermediate representations. We report the visualizations for LLaMA3-8B under Top-1, Top-3, and Top-7 routing, with similar routing patterns observed across other backbone models (see \cref{fig:crosstype_llama3_top1,fig:crosstype_llama3_top3,fig:crosstype_llama3_top7,fig:grid_attn_llama3_top1_all_datasets,fig:grid_ffn_llama3_top1_all_datasets,fig:grid_attn_llama3_top3_all_datasets,fig:grid_ffn_llama3_top3_all_datasets,fig:grid_attn_llama3_top7_all_datasets,fig:grid_ffn_llama3_top7_all_datasets}).

\begin{figure*}[t]
    \centering
    \includegraphics[width=\linewidth]{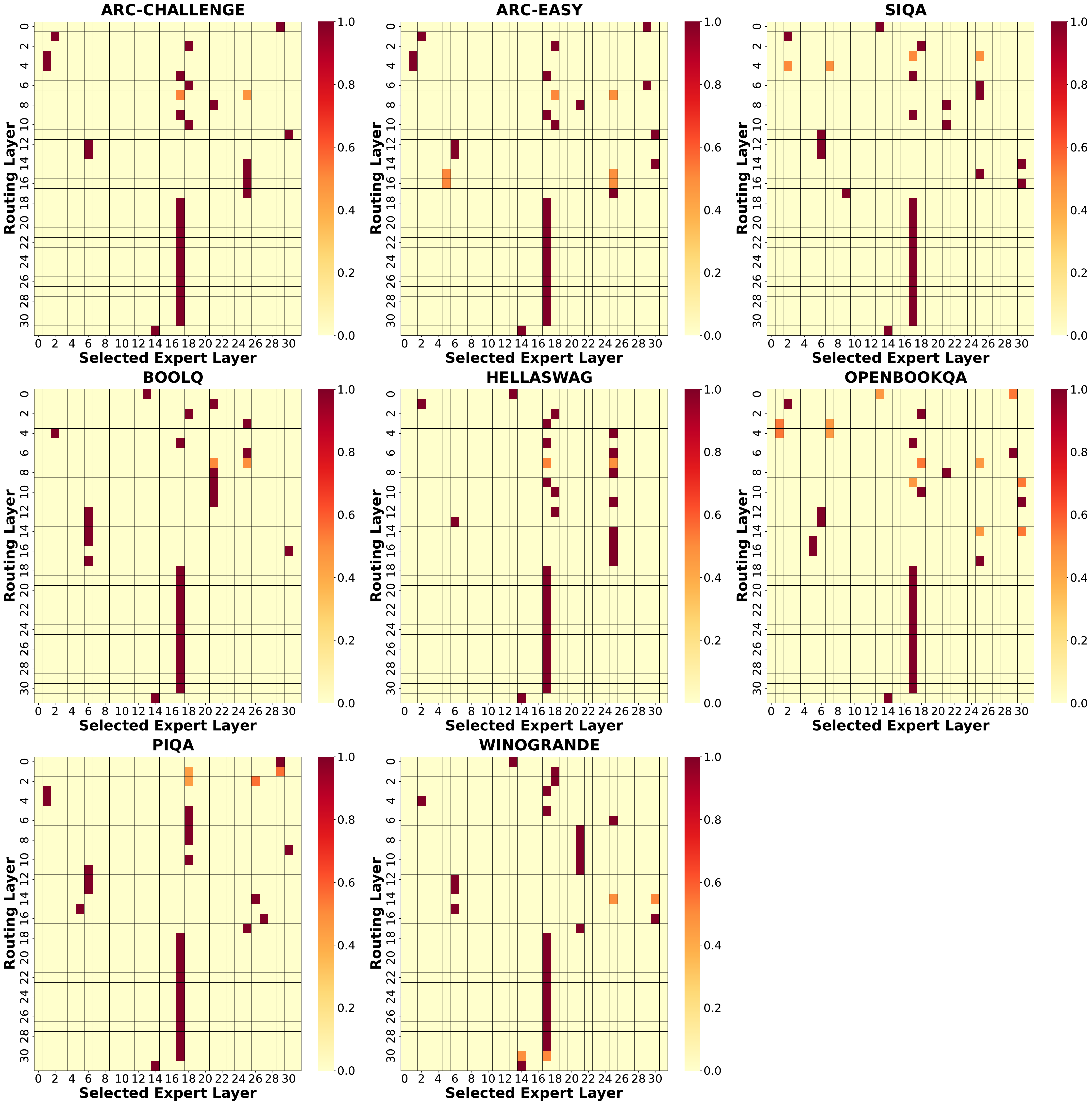}
    \caption{Routing behavior of MoC in LLaMA3-8B under Top-$K=1$ routing when selecting Attention-block controls across layers.}
    \label{fig:grid_attn_llama3_top1_all_datasets}
\end{figure*}

\begin{figure*}[t]
    \centering
    \includegraphics[width=\linewidth]{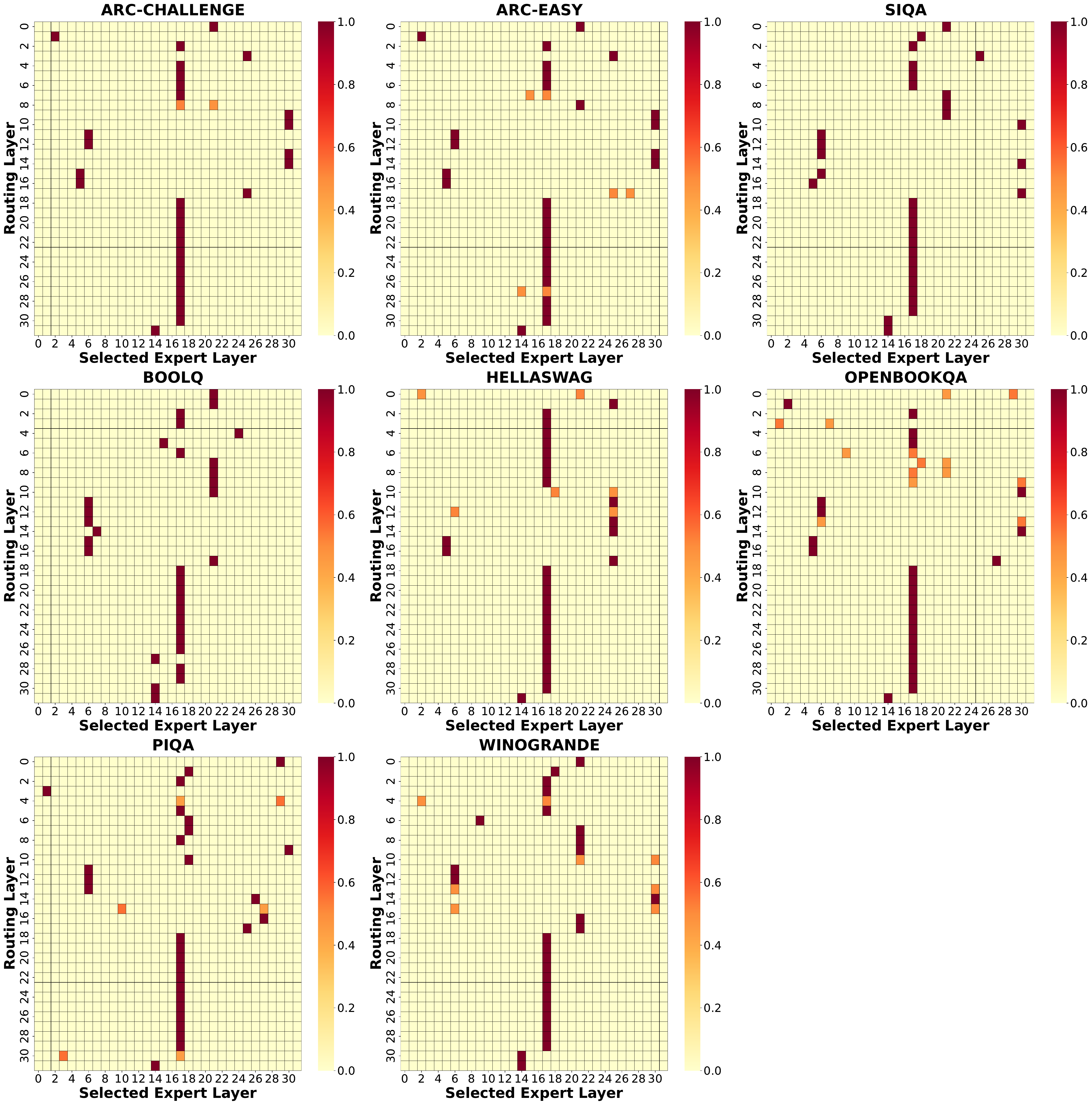}
    \caption{Routing behavior of MoC in LLaMA3-8B under Top-$K=1$ routing when selecting FFN-block controls across layers.}
    \label{fig:grid_ffn_llama3_top1_all_datasets}
\end{figure*}

\begin{figure*}[t]
    \centering
    \includegraphics[width=\linewidth]{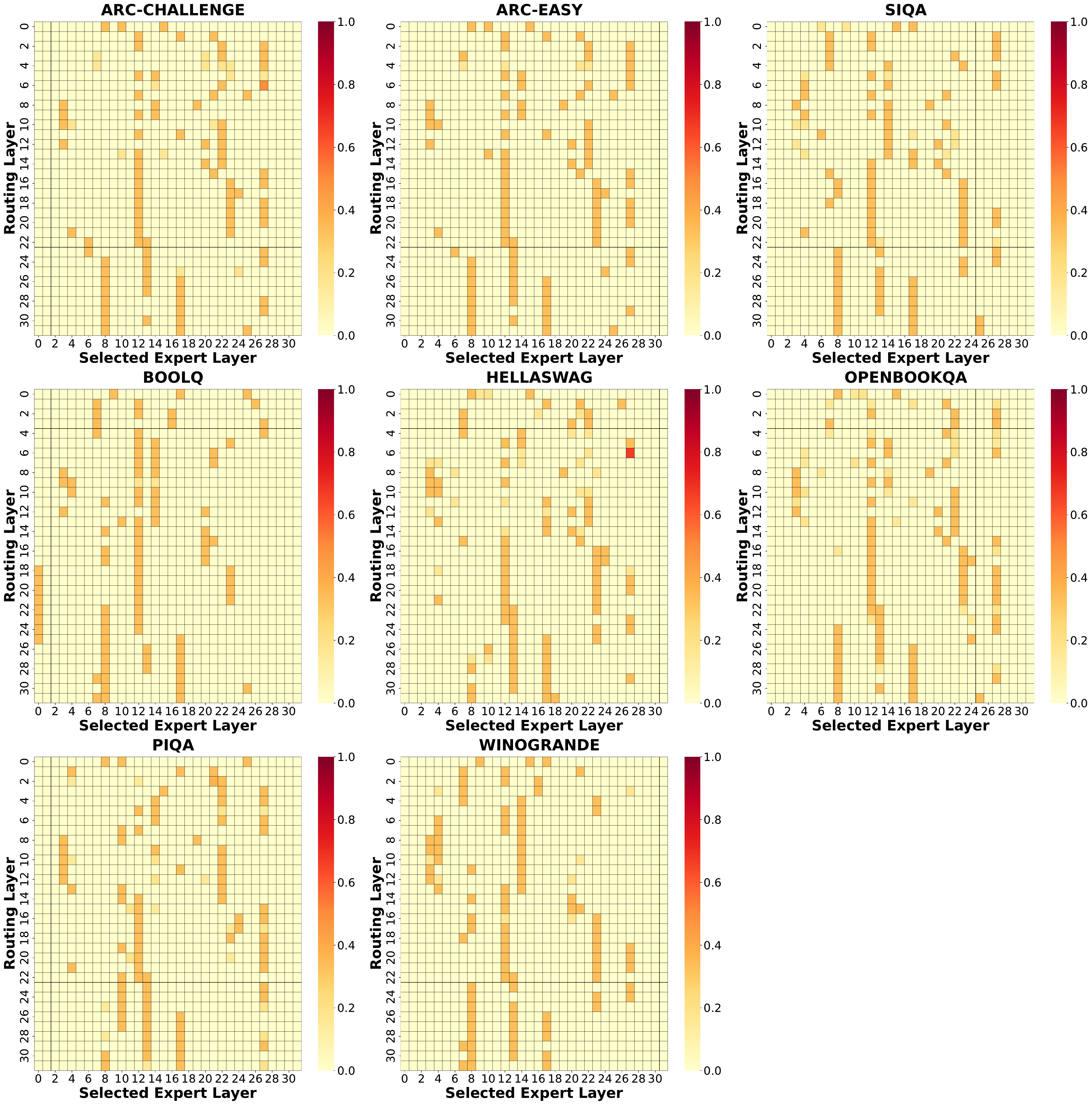}
    \caption{Routing behavior of MoC in LLaMA3-8B under Top-$K=3$ routing when selecting Attention-block controls across layers.}
    \label{fig:grid_attn_llama3_top3_all_datasets}
\end{figure*}

\begin{figure*}[t]
    \centering
    \includegraphics[width=\linewidth]{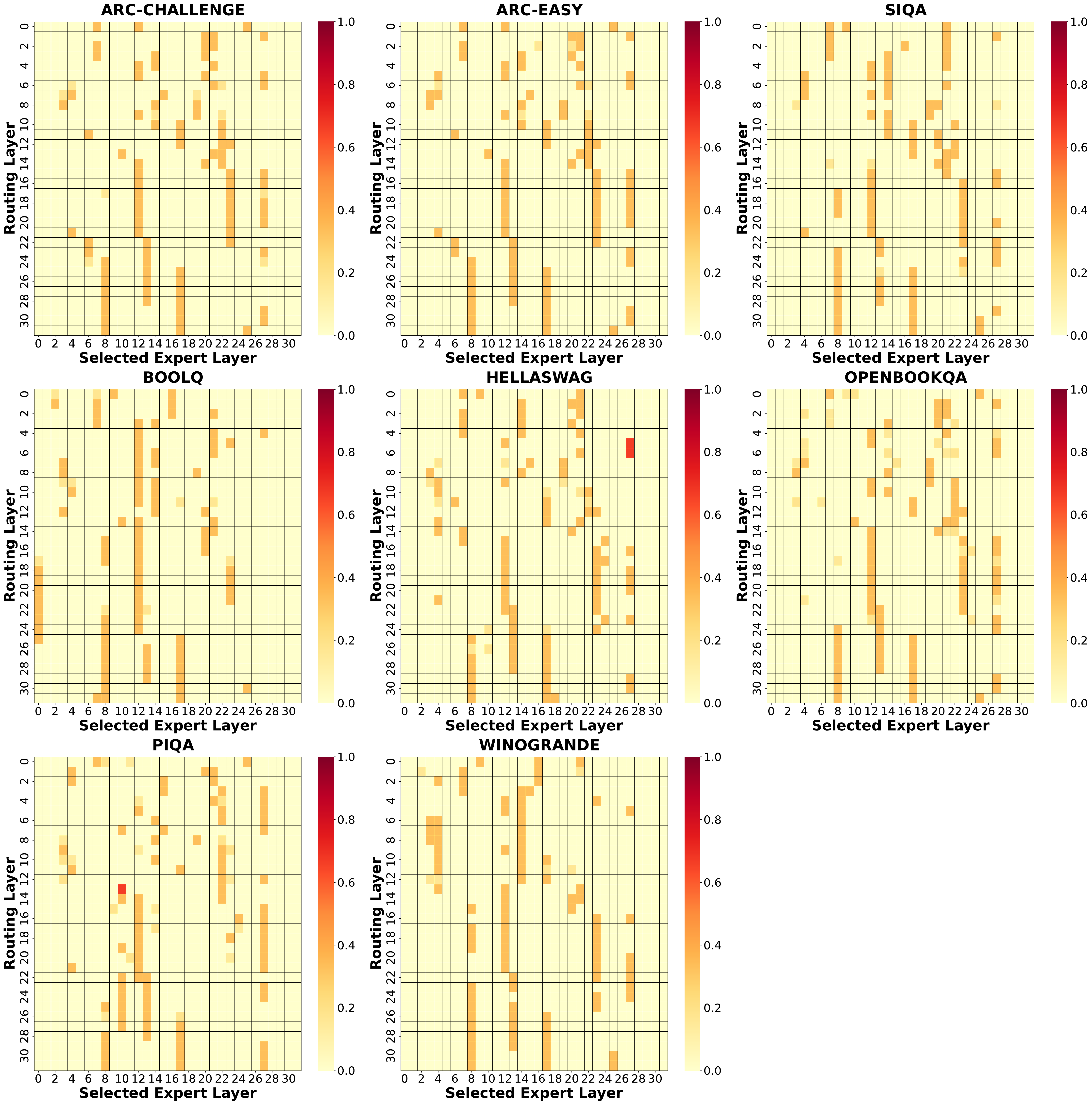}
    \caption{Routing behavior of MoC in LLaMA3-8B under Top-$K=3$ routing when selecting FFN-block controls across layers.}
    \label{fig:grid_ffn_llama3_top3_all_datasets}
\end{figure*}

\begin{figure*}[t]
    \centering
    \includegraphics[width=\linewidth]{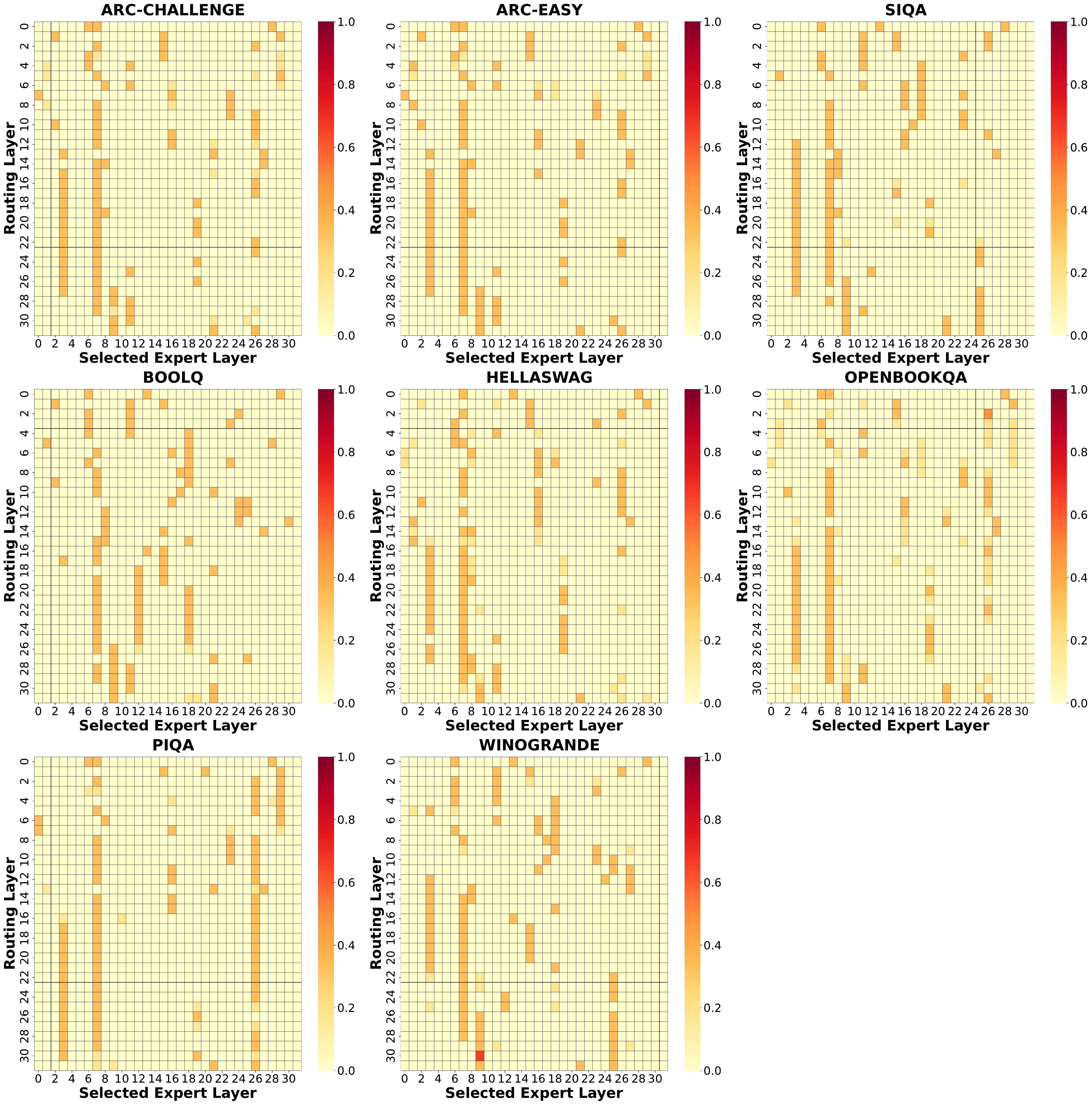}
    \caption{Routing behavior of MoC in LLaMA3-8B under Top-$K=7$ routing when selecting Attention-block controls across layers.}
    \label{fig:grid_attn_llama3_top7_all_datasets}
\end{figure*}

\begin{figure*}[t]
    \centering
    \includegraphics[width=\linewidth]{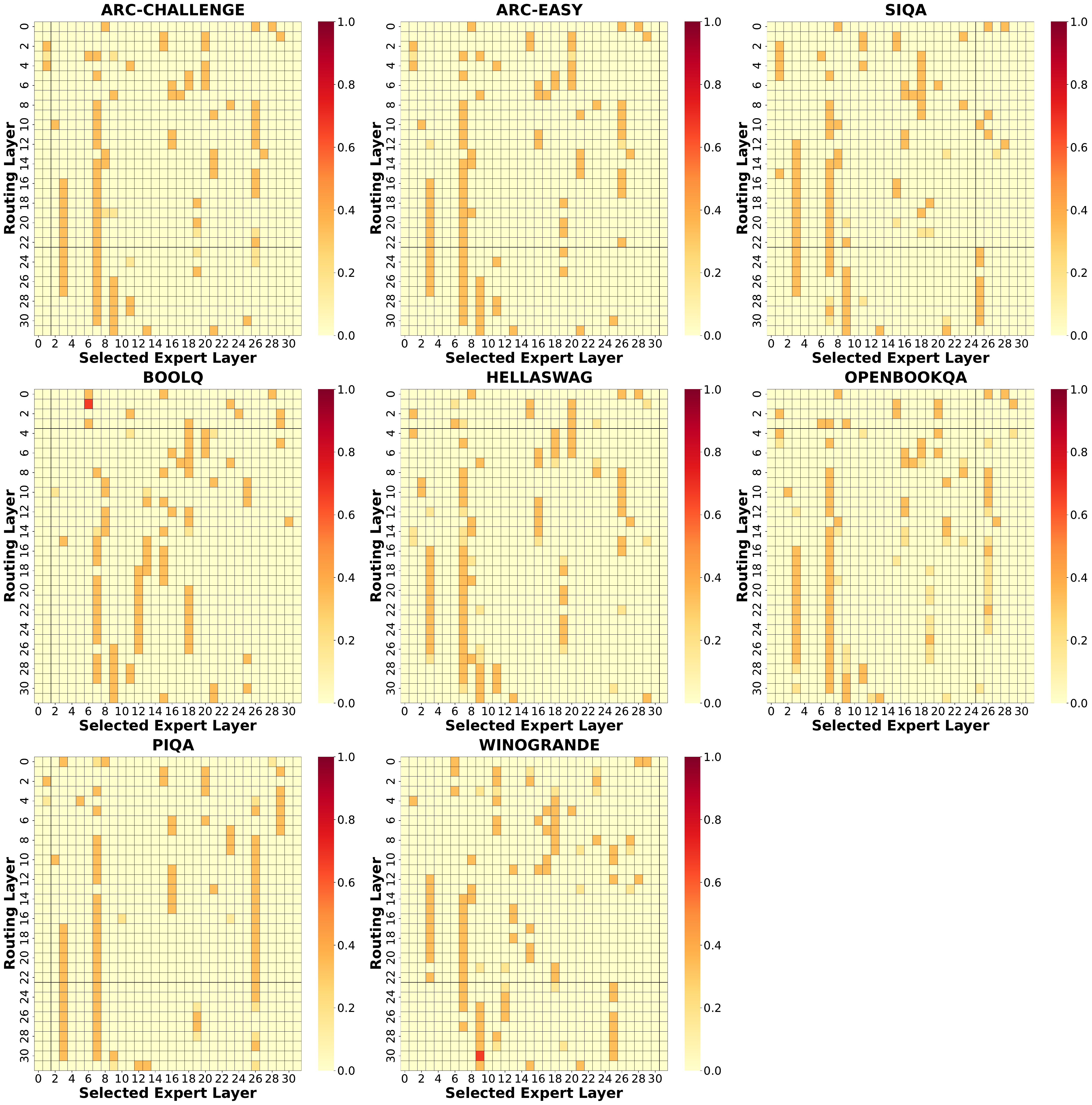}
    \caption{Routing behavior of MoC in LLaMA3-8B under Top-$K=7$ routing when selecting FFN-block controls across layers.}
    \label{fig:grid_ffn_llama3_top7_all_datasets}
\end{figure*}

\end{document}